\documentclass[12pt,a4paper]{article}

\usepackage[top=2.5cm,bottom=2.5cm,left=3cm,right=3cm]{geometry}
\usepackage{amsmath,amssymb,amsthm,mathtools}
\usepackage{bm}
\usepackage{booktabs}
\usepackage{hyperref}
\usepackage{xcolor}
\usepackage{microtype}
\usepackage{natbib}
\usepackage{enumitem}
\usepackage{array}
\usepackage{longtable}
\usepackage{setspace}
\usepackage{fancyhdr}
\usepackage{tikz}
\usepackage{pgfplots}
\pgfplotsset{compat=1.18}
\usetikzlibrary{arrows.meta,positioning,shapes.geometric,calc}

\hypersetup{
  colorlinks = true,
  linkcolor = black,
  citecolor = black,
  urlcolor = black,
  pdftitle = {The Mathematics of AI Winters},
  pdfauthor = {Miquel Noguer i Alonso and David Pacheco Aznar}
}

\theoremstyle{plain}
\newtheorem{theorem}{Theorem}[section]
\newtheorem{proposition}[theorem]{Proposition}

\newtheorem{lemma}[theorem]{Lemma}
\theoremstyle{definition}
\newtheorem{definition}[theorem]{Definition}
\newtheorem{remark}[theorem]{Remark}

\DeclareMathOperator*{\argmin}{arg\,min}

\DeclareMathOperator{\E}{\mathbb{E}}
\DeclareMathOperator{\Var}{Var}

\DeclareMathOperator{\sign}{sign}
\DeclareMathOperator{\VCdim}{VCdim}
\DeclareMathOperator{\tr}{tr}

\newcommand{\R}{\mathbb{R}}
\newcommand{\N}{\mathbb{N}}
\newcommand{\Prob}{\mathbb{P}}
\newcommand{\norm}[1]{\left\lVert #1 \right\rVert}
\newcommand{\abs}[1]{\left\lvert #1 \right\rvert}
\newcommand{\inner}[2]{\langle #1,\,#2 \rangle}

\newcommand{\calH}{\mathcal{H}}
\newcommand{\calX}{\mathcal{X}}
\newcommand{\calY}{\mathcal{Y}}
\newcommand{\calD}{\mathcal{D}}
\newcommand{\calL}{\mathcal{L}}

\newcommand{\calA}{\mathcal{A}}

\begin{document}

\begin{center}
{\LARGE \bfseries The Mathematics of AI Winters\par}
\vspace{0.5em}
{\large A Mathematical Taxonomy of Paradigm Fragility in AI Winter\par}
\vspace{1em}
{\normalsize Miquel Noguer i Alonso (AIFI), David Pacheco Aznar (Staq.io)\par}
\vspace{1em}
{\normalsize June 2026\par}
\end{center}

\begin{abstract}
The two major periods of reduced funding and confidence in artificial intelligence research, usually described as the first and second AI winters, are often explained in terms of engineering failure, commercial disappointment, and inflated expectations. This article develops a complementary thesis. It argues that the dominant paradigms of the relevant periods also encountered genuine formal barriers: limitations of representation, optimisation, computational complexity, statistical learnability, and high-dimensional approximation. The contribution is synthetic rather than archival. We do not claim that particular theorems mechanically caused the winters. Rather, we show that several central disappointments of early AI were aligned with mathematically precise bottlenecks. We analyse those bottlenecks through the perceptron impossibility results associated with \citet{minsky1969}, the complexity-theoretic hardness of exact neural-network training \citep{blum1992}, minimax rates for nonparametric estimation in high dimension \citep{stone1982}, vanishing-gradient analyses \citep{hochreiter1991,bengio1994}, and classical statistical learning theory \citep{vapnik1971,valiant1984,blumer1989}. We then relate these barriers to the later breakthroughs that mitigated, rather than eliminated, them.

\vspace{0.5em}
\noindent\textbf{Keywords:} AI winter, PAC learning, VC dimension, NP-hardness, curse of dimensionality, vanishing gradients, universal approximation, double descent, bias--variance trade-off.
\end{abstract}
\newpage
\tableofcontents
\newpage

\section{Introduction}

The history of artificial intelligence is marked by repeated cycles of expansion and retrenchment. Two episodes are conventionally identified as AI winters: the first, associated with the mid-1970s, followed the critical reception of ambitious symbolic and perceptron-based programmes, most visibly after the Lighthill Report \citep{lighthill1973}, the philosophical critique of \citet{dreyfus1972}, and the reception of \emph{Perceptrons} \citep{minsky1969}; the second, associated with the late 1980s and early 1990s, followed the collapse of the expert-systems market --- whose intellectual programme is best represented by \citet{feigenbaum1977} and \citet{buchanan1984} --- and the limited practical performance of shallow connectionist systems. The standard explanations emphasise commercial overpromising, insufficient hardware, brittle engineering, and institutional disappointment. Those explanations are important, but they are not the whole story.

This article studies a narrower claim. It argues that the dominant paradigms of those periods also faced mathematically identifiable bottlenecks. Some were representational, as in the inability of single-layer threshold models to capture non-linearly separable relations. Some were computational, as in the combinatorial explosion of symbolic search and the worst-case hardness of exact training. Some were statistical, as in the weakness of then-available generalisation guarantees for flexible models. Others were optimisation-theoretic, as in the instability of deep gradient propagation under saturating nonlinearities. The aim is therefore not to reduce intellectual history to theorem-proving. It is to show that the disappointments of those eras were, in part, consistent with formal limits rather than merely with poor execution.

The contribution is mainly synthetic. None of the underlying theorems is new. What is new is their integration into a unified mathematical interpretation of AI winters.

\paragraph{Methodological clarification.} At the level of history, our claim is one of compatibility and necessary vulnerability, not monocausal explanation. At the level of mathematics, the novelty of the paper lies not in new lower bounds but in a budgeted synthesis: representational limits, optimisation hardness, and finite-sample fragility are analysed as jointly binding constraints under shared data and compute budgets. This narrower claim is also the stronger one, because it is precisely the claim for which the cited theory can supply genuine support.

Throughout, we distinguish three notions that are often conflated in informal discussion: representational capacity, learnability from finite samples, and tractable trainability. A model class may be expressive without being efficiently trainable; it may be trainable without carrying non-vacuous finite-sample guarantees; and it may possess asymptotic guarantees while remaining practically unusable at realistic scales. This separation is crucial for interpreting both the failures of early AI and the later successes of deep learning.

A further qualification is necessary. The theorems discussed below did not by themselves cause the winters. The historical episodes were also shaped by economics, hardware, programming practice, benchmark choice, and the sociology of research communities. Our claim is therefore deliberately limited: formal barriers materially contributed to the fragility of the leading paradigms, and later progress became durable only when those barriers were mitigated by new mathematics, algorithms, and architectures.

The article proceeds as follows. Section~\ref{sec:pac} introduces the formal learning framework. Section~\ref{sec:perceptron} studies representational barriers and the classical perceptron results. Section~\ref{sec:complexity} examines computational hardness, combinatorial explosion, and high-dimensional estimation. Section~\ref{sec:gradients} derives the vanishing-gradient phenomenon. Section~\ref{sec:generalization} reviews the statistical barriers associated with generalisation. Section~\ref{sec:resolution} considers the mathematical developments that mitigated these difficulties. Section~\ref{sec:taxonomy} summarises the argument in comparative form, and Section~\ref{sec:conclusion} concludes.

\paragraph{Status of results.} To avoid overstating what is proved here, we classify the
mathematical content of the article into four categories.
\emph{(i) Classical theorems cited from the literature}: the perceptron impossibility
results, the Blum--Rivest NP-hardness theorem, the Stone minimax rate, the
vanishing-gradient analyses, and the foundational PAC/VC bounds. These are used as
given and credited to their original sources.
\emph{(ii) Framework synthesis}: Definition~\ref{def:barrier} (the barrier triple)
and Proposition~\ref{thm:fragility} (the Joint-Binding condition) organise the
classical results into a single joint-binding condition parameterised by a data and
compute budget. The proposition is a structural reorganisation of the preceding
sections, not a new lower bound; its proof is labelled a \emph{synthesis} for this
reason.
\emph{(iii) Interpretive remarks}: the connections drawn between the barrier triple
and modern developments (scaling laws, neural collapse, lottery tickets, implicit
bias) are post-hoc lenses, not historical-causal claims.
\emph{(iv) Historical context}: all attributions of winter episodes to formal
barriers are stated as necessary-condition arguments, never as sufficient ones; the
actual historical interventions (multilayer architectures, non-saturating
activations, improved initialisation, large corpora, hardware scaling) are named
explicitly where relevant.

\subsection{A Formal Notion of an AI Winter Barrier}

The central thesis of this article can be stated more formally by identifying the
mathematical structure underlying paradigm fragility.

\begin{definition}[AI Winter Barrier]\label{def:barrier}
Let $\mathcal{A}$ be a class of AI architectures and $\mathcal{T}$ a class of
learning or reasoning tasks. A \emph{formal AI winter barrier} is a triple
\[
\mathcal{B}(\mathcal{A},\mathcal{T})
= \bigl(R(\mathcal{A},\mathcal{T}),\;
         C(\mathcal{A},\mathcal{T}),\;
         S(\mathcal{A},\mathcal{T})\bigr),
\]
where:
\begin{enumerate}[label=(\roman*)]
  \item $R$ is a \emph{representational constraint} limiting the set of functions
        or relations realisable by $\mathcal{A}$;
  \item $C$ is a \emph{computational or optimisation constraint} implying that
        training, inference, or search over $\mathcal{A}$ is intractable in the
        worst case or scales prohibitively in practice;
  \item $S$ is a \emph{statistical constraint} under which finite-sample guarantees
        become vacuous or sample requirements become unrealistic at relevant scales.
\end{enumerate}
We say that a paradigm operates in a \emph{winter regime} if at least two of
these constraints bind simultaneously under realistic data and compute budgets.
\end{definition}

\begin{remark}
We will make the joint-binding condition precise in
Proposition~\ref{thm:fragility}, where the data budget $n$ and compute budget
$\kappa$ appear as explicit parameters and each constraint is tied to a
quantitative obstruction. Definition~\ref{def:barrier} itself remains at a
structural level: it fixes vocabulary, not thresholds.
\end{remark}

\begin{remark}
This definition is not intended as a historical law. It is a structural device that
organises the various theorems discussed below into a common analytical framework.
Its purpose is to make precise what it means for an AI paradigm to be mathematically
fragile.
\end{remark}

\paragraph{Central insight.} AI winters can be interpreted as periods in which the dominant paradigms faced a joint failure regime: representational restrictions, computational intractability, and weak statistical control were not isolated problems but interacting constraints. The formal synthesis is given in Proposition~\ref{thm:fragility} (Section~\ref{sec:taxonomy}).

\paragraph{Notation.} We write $\R$ for the reals, $\N$ for the natural numbers,
$[n] = \{1,\ldots,n\}$. Vectors are bold lowercase ($\mathbf{x}$), matrices bold
uppercase ($\mathbf{W}$). $\norm{\cdot}$ denotes the Euclidean norm unless otherwise
specified. $\Prob[\cdot]$ and $\E[\cdot]$ are probability and expectation with
respect to a distribution clear from context.

\section{The Formal Learning Framework}\label{sec:pac}

\subsection{The PAC Model}

We work throughout in the Probably Approximately Correct (PAC) learning
framework of \citet{valiant1984}.

\begin{definition}[Learning Problem]
  Let $\calX$ be an instance space, $\calY$ an output space, and $\calD$ an
  unknown joint distribution over $\calX \times \calY$. Given a hypothesis
  class $\calH \subseteq \calY^{\calX}$ and a loss function
  $\ell: \calY \times \calY \to \R_{\geq 0}$, the \emph{true risk} of
  $h \in \calH$ is
  \[
    L_{\calD}(h) \;=\; \E_{(x,y) \sim \calD}[\ell(h(x), y)],
  \]
  and the \emph{empirical risk} on a sample $S = \{(x_i, y_i)\}_{i=1}^m$ is
  \[
    L_S(h) \;=\; \frac{1}{m}\sum_{i=1}^m \ell(h(x_i), y_i).
  \]
\end{definition}

\begin{definition}[Realizable PAC Learnability]
  A binary hypothesis class $\calH$ is \emph{PAC learnable in the realizable
  setting} if there exists an algorithm $\calA$ and a function
  $m^{\mathrm{real}}_{\calH}:(0,1)^2 \to \N$ such that whenever the data-generating
  distribution $\calD$ satisfies $\inf_{h \in \calH} L_{\calD}(h) = 0$, then for
  every $\varepsilon,\delta \in (0,1)$ and every $m \geq
  m^{\mathrm{real}}_{\calH}(\varepsilon,\delta)$,
  \[
    \Prob_{S \sim \calD^m}\!\left[L_{\calD}(\calA(S)) \le \varepsilon\right]
    \ge 1-\delta.
  \]
\end{definition}

\begin{definition}[Agnostic PAC Learnability]
  A binary hypothesis class $\calH$ is \emph{agnostically PAC learnable} if
  there exists an algorithm $\calA$ and a function
  $m^{\mathrm{agn}}_{\calH}:(0,1)^2 \to \N$ such that for every distribution
  $\calD$ over $\calX \times \{0,1\}$, every $\varepsilon,\delta \in (0,1)$,
  and every $m \ge m^{\mathrm{agn}}_{\calH}(\varepsilon,\delta)$,
  \[
    \Prob_{S \sim \calD^m}\!\left[L_{\calD}(\calA(S)) \le
    \inf_{h \in \calH} L_{\calD}(h) + \varepsilon\right] \ge 1-\delta.
  \]
\end{definition}

The distinction matters. Realizable PAC learning assumes that the target relation
is representable inside $\calH$, whereas agnostic PAC learning competes only with
best-in-class risk. In modern learning theory the agnostic setting is the more
robust formulation, but the cleanest classical VC equivalence is usually first
stated in the realizable case; the agnostic characterization proceeds through
uniform convergence and empirical risk minimisation \citep{blumer1989,ehrenfeucht1989,shalev2014}.

\subsection{VC Dimension and the Fundamental Theorem}

\begin{definition}[Shattering and VC Dimension]
  A set $\mathcal{C} = \{x_1,\ldots,x_m\} \subset \calX$ is \emph{shattered} by
  $\calH$ if for every labelling $\mathbf{y} \in \{0,1\}^m$ there exists
  $h \in \calH$ with $h(x_i) = y_i$ for all $i$. The
  \emph{Vapnik--Chervonenkis dimension} is
  \[
    \VCdim(\calH) \;=\; \sup\bigl\{m \in \N : \exists\, \mathcal{C} \subset \calX,
    \; \abs{\mathcal{C}} = m,\; \calH \text{ shatters } \mathcal{C}\bigr\}.
  \]
\end{definition}

\begin{theorem}[Fundamental Theorem in the Realizable Case]\label{thm:pac}
\citep{vapnik1971,blumer1989,ehrenfeucht1989}
  For binary classification in the realizable setting, a hypothesis class
  $\calH$ is PAC learnable if and only if $d = \VCdim(\calH) < \infty$.
  Moreover, the realizable sample complexity satisfies\footnote{The $\log(1/\varepsilon)$
  gap between the classical upper and lower bounds was closed by \citet{hanneke2016},
  who established the tight rate $m^{\mathrm{real}}_{\calH}(\varepsilon,\delta) =
  \Theta((d + \log(1/\delta))/\varepsilon)$ via a one-inclusion-graph majority-vote
  algorithm. The classical bounds above remain the historically relevant statement
  for the period covered in this article.}
  \[
    C_1\,\frac{d + \log(1/\delta)}{\varepsilon}
    \;\leq\; m^{\mathrm{real}}_{\calH}(\varepsilon,\delta)
    \;\leq\; C_2\,\frac{d\log(1/\varepsilon) + \log(1/\delta)}{\varepsilon}
  \]
  for universal constants $C_1,C_2>0$.
\end{theorem}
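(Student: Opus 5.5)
We prove the two directions and the two-sided bound separately. Finiteness of $d$ is obtained from the upper bound, and necessity of finiteness from the lower bound. Throughout we use the realizable setting, so some $h^\star\in\calH$ has $L_{\calD}(h^\star)=0$. The algorithm $\calA$ is any consistent empirical risk minimiser, which returns some $h\in\calH$ with $L_S(h)=0$.

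\emph{Upper bound.} We follow the double-sample (symmetrisation) argument of \citet{vapnik1971} and \citet{blumer1989}. Call $h$ \emph{$\varepsilon$-bad} if $L_{\calD}(h)>\varepsilon$. Let $B$ be the event that some $\varepsilon$-bad $h$ is consistent with $S$.
\begin{enumerate}[label=(\roman*)]
  \item Draw an independent ghost sample $S'$ of size $m$. By a Chernoff bound, for $m\ge 8/\varepsilon$, a fixed bad $h$ errs on at least $\varepsilon m/2$ points of $S'$ with probability at least $1/2$. Hence $\Prob[B]\le 2\Prob[B']$, where $B'$ is the event that some $h$ is consistent on $S$ and errs at least $\varepsilon m/2$ times on $S'$.
  \item Condition on the multiset $S\cup S'$ and randomly swap paired coordinates. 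Only the restrictions of $\calH$ to these $2m$ points matter. By the Sauer--Shelah lemma there are at most $(2em/d)^d$ such restrictions.
  \item For a single restriction, the probability that all $\ge\varepsilon m/2$ errors land in $S'$ is at most $2^{-\varepsilon m/2}$.
\end{enumerate}
A union bound then gives
\[
\Prob[B]\le 2\,(2em/d)^d\,2^{-\varepsilon m/2}.
\]
Setting this to $\delta$ and solving for $m$ yields
\[
m=O\bigl((d\log(1/\varepsilon)+\log(1/\delta))/\varepsilon\bigr).
\]
The step that needs care is solving the implicit inequality $m\gtrsim(d\log m+\log(1/\delta))/\varepsilon$. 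We handle it with the standard lemma that $m\ge 2a\log a+2b$ implies $m\ge a\log m+b$, taking $a=\Theta(d/\varepsilon)$. The Sauer--Shelah lemma is itself proved by induction on $|\mathcal C|+d$, or by the shifting argument.

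\emph{Lower bound and necessity.} Let $\{x_0,\dots,x_{d-1}\}$ be shattered, with $d\ge2$. Following \citet{ehrenfeucht1989}, define $\calD$ to put mass $1-8\varepsilon$ on $x_0$ and mass $8\varepsilon/(d-1)$ on each other point. Draw the target uniformly from the $2^{d-1}$ labellings with $h(x_0)=0$.
\begin{itemize}
  \item If $m\le c\,d/\varepsilon$, then with constant probability more than half of $x_1,\dots,x_{d-1}$ go unseen.
  \item On each unseen point, any learner errs with probability $1/2$ over the random target.
  \item Its expected risk is therefore $\ge 2\varepsilon$, and a reverse Markov argument makes $L_{\calD}>\varepsilon$ with probability $>\delta$ for small constant $\delta$.
  \item Averaging over targets, some fixed target in $\calH$ defeats the learner.
\end{itemize}
The $\log(1/\delta)/\varepsilon$ term comes from a two-point instance. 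Take $x_0,x_1$ and two hypotheses that disagree on $x_1$, with $\Prob(x_1)=\varepsilon$. If $x_1$ is never sampled, which has probability $(1-\varepsilon)^m>\delta$ unless $m\gtrsim\log(1/\delta)/\varepsilon$, one of the two targets is misclassified.

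Combining these cases gives the stated lower bound. If $d=\infty$, the first construction applies for every $d$, so no finite $m^{\mathrm{real}}_{\calH}(\varepsilon,\delta)$ exists; this gives the "only if" direction.

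We expect the main obstacle to be the symmetrisation step. Its conditioning has to be made rigorous, since the "bad" hypothesis depends on $S$, and this is why we fix it on $S$ before applying the Chernoff bound on $S'$. The remaining work is tracking constants to obtain universal $C_1$ and $C_2$.
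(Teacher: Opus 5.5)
The paper gives no proof of this theorem; it is quoted from \citet{vapnik1971,blumer1989,ehrenfeucht1989}, and your outline is the standard argument of those sources. Your failure estimate $\Prob[B]\le 2(2em/d)^d\,2^{-\varepsilon m/2}$ is correct, but the way you invert it does not give the stated upper bound. You replace $d\log(2em/d)$ by $d\log m$ and apply your lemma with $a=\Theta(d/\varepsilon)$. Then $2a\log a=\Theta\bigl(\tfrac{d}{\varepsilon}\log\tfrac{d}{\varepsilon}\bigr)$, so you only obtain $m=O\bigl((d\log(d/\varepsilon)+\log(1/\delta))/\varepsilon\bigr)$. At fixed $\varepsilon,\delta$ this exceeds $C_2(d\log(1/\varepsilon)+\log(1/\delta))/\varepsilon$ by an unbounded $\log d$ factor. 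This is not slack in the lemma: $m=K\,d\log(1/\varepsilon)/\varepsilon$ violates $m\ge c\,d\log m/\varepsilon$ as soon as $c\log d>K\log(1/\varepsilon)$. The $\log d$ is cancelled by the $1/d$ inside the Sauer--Shelah logarithm, which you discarded. The actual requirement is $\tfrac{\ln 2}{2}\,\varepsilon m\ge d\ln\tfrac{2em}{d}+\ln\tfrac{2}{\delta}$. Put $x=m/d$ and apply your lemma (which is valid with natural logarithms for $a\ge1$, $b\ge0$) to $x\ge a\ln x+b$, with $a=\tfrac{2}{\varepsilon\ln 2}$ and $b=\tfrac{2}{\varepsilon\ln 2}\bigl(\ln(2e)+\tfrac{\ln(2/\delta)}{d}\bigr)$. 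Then $m\ge d(2a\ln a+2b)=O\bigl((d\log(1/\varepsilon)+d+\log(1/\delta))/\varepsilon\bigr)$ suffices, and this choice also enforces $m\ge 8/\varepsilon$ and $2m\ge d$. The stray $d/\varepsilon$ is absorbed into $d\log(1/\varepsilon)/\varepsilon$ only for $\varepsilon$ bounded away from $1$ (say $\varepsilon\le1/2$). That is why the bound of \citet{blumer1989} carries $\log(13/\varepsilon)$.

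The two-point instance in your lower bound also fails as written, for two reasons.
\begin{enumerate}
\item With $\Prob(x_1)=\varepsilon$ exactly, a learner that errs only at $x_1$ has $L_{\calD}=\varepsilon$, which the definition accepts. Put mass $2\varepsilon$ on $x_1$ (with $\varepsilon\le1/4$). Failure then has probability at least $(1-2\varepsilon)^m$, which exceeds $\delta$ unless $m\ge\ln(1/\delta)/(4\varepsilon)$.
\item The two hypotheses must \emph{agree} on $x_0$, not merely disagree on $x_1$. Otherwise the label of $x_0$, which is sampled almost surely, identifies the target. With $d\ge2$, take a shattered pair and the labellings $(0,0)$ and $(0,1)$.
\end{enumerate}
State $d\ge2$ (or the existence of such a pair) as a hypothesis of the whole lower bound. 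It cannot be dropped: for $\calH=\{h,1-h\}$ one example suffices for every $\varepsilon,\delta$, and a singleton class needs none, so the displayed lower bound is false for those classes. Likewise, both constructions need $\varepsilon$ and $\delta$ below absolute constants ($\varepsilon<1/8$, $\delta$ small), which is the range in which the cited bounds hold.

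In the reverse-Markov step, work with the risk restricted to $\{x_1,\dots,x_{d-1}\}$, which lies in $[0,8\varepsilon]$; the full risk need not, if the learner errs at $x_0$. A conditional mean of at least $2\varepsilon$ then gives $\Prob[L_{\calD}>\varepsilon]\ge\tfrac17\Prob[\text{more than half unseen}]$. The following steps are fine as they stand:
\begin{itemize}
\item the symmetrisation, with the bad hypothesis selected from $S$;
\item the $2^{-\varepsilon m/2}$ swap bound;
\item the averaging over random targets;
\item the $d=\infty$ argument.
\end{itemize}
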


In the agnostic setting, finite VC dimension still characterises learnability,
but the standard sample-complexity scale becomes
\[
  m^{\mathrm{agn}}_{\calH}(\varepsilon,\delta)
  = \Theta\!\left(\frac{d + \log(1/\delta)}{\varepsilon^2}\right),
\]
with the proof proceeding through uniform convergence and empirical risk
minimisation rather than through the realizable reduction alone
\citep{blumer1989,ehrenfeucht1989,shalev2014}. The key message is unchanged:
classes with $\VCdim(\calH) \gg m$ do not admit non-vacuous finite-sample
control.

\begin{lemma}[Sauer--Shelah]
  Let $d = \VCdim(\calH) < \infty$. The growth function
  \[
    \Pi_{\calH}(m) \;=\; \max_{\mathcal{C} \subset \calX,\,\abs{\mathcal{C}}=m}
    \abs{\calH_{\mathcal{C}}}
  \]
  satisfies
  \[
    \Pi_{\calH}(m) \;\leq\; \sum_{i=0}^{d} \binom{m}{i}
    \;\leq\; \left(\frac{em}{d}\right)^d.
  \]
\end{lemma}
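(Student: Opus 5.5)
The plan is to prove the two inequalities separately. The first inequality, $\Pi_{\calH}(m) \le \sum_{i=0}^d \binom{m}{i}$, will be proved by the standard shifting/induction argument, in its "Pajor" form. Fix a set $\mathcal{C}=\{x_1,\dots,x_m\}$ and write $\calH_{\mathcal{C}}$ for the set of restrictions of hypotheses to $\mathcal{C}$. The goal is the stronger claim
\[
\abs{\calH_{\mathcal{C}}} \le \abs{\{B \subseteq \mathcal{C} : \calH \text{ shatters } B\}}.
\]
This suffices: no shattered $B$ can have $\abs{B} > d$, so the right-hand side is at most the number of subsets of $\mathcal{C}$ of size at most $d$, namely $\sum_{i=0}^d \binom{m}{i}$. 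Taking the maximum over $\mathcal{C}$ then gives the first bound.

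The stronger claim is proved by induction on $m$. The case $m=1$ (or $m=0$) is checked directly. For the inductive step, set $\mathcal{C}'=\{x_2,\dots,x_m\}$ and split the labellings via two families:
\[
Y_0=\{(y_2,\dots,y_m) : \exists\, y_1,\ (y_1,\dots,y_m)\in \calH_{\mathcal{C}}\},
\qquad
Y_1=\{(y_2,\dots,y_m) : (0,y_2,\dots,y_m),\,(1,y_2,\dots,y_m)\in\calH_{\mathcal{C}}\}.
\]
These satisfy $\abs{\calH_{\mathcal{C}}}=\abs{Y_0}+\abs{Y_1}$.
\begin{itemize}
\item $Y_0$ is the restriction of $\calH$ to $\mathcal{C}'$. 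By induction, $\abs{Y_0}$ is at most the number of subsets of $\mathcal{C}'$ shattered by $\calH$, which equals the number of shattered subsets of $\mathcal{C}$ that do not contain $x_1$.
\item $Y_1$ is the trace on $\mathcal{C}'$ of the subclass $\calH'\subseteq\calH$ consisting of hypotheses whose restriction has a "twin" differing only at $x_1$. By induction applied to $\calH'$, $\abs{Y_1}$ is at most the number of $B\subseteq\mathcal{C}'$ shattered by $\calH'$. For each such $B$, the set $B\cup\{x_1\}$ is shattered by $\calH$, precisely because of the twins. So these $B$ inject into the shattered subsets of $\mathcal{C}$ that contain $x_1$.
\end{itemize}
Adding the two bounds closes the induction.

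For the second inequality, $\sum_{i=0}^d\binom{m}{i}\le (em/d)^d$, I would assume $m\ge d\ge 1$; this is the regime where the bound is used. Since $d/m\le 1$, each term can be multiplied by $(m/d)^{d-i}\ge1$:
\[
\sum_{i=0}^d\binom{m}{i}\le\Bigl(\frac{m}{d}\Bigr)^d\sum_{i=0}^{d}\binom{m}{i}\Bigl(\frac{d}{m}\Bigr)^i\le\Bigl(\frac{m}{d}\Bigr)^d\Bigl(1+\frac{d}{m}\Bigr)^m\le\Bigl(\frac{m}{d}\Bigr)^d e^d,
\]
using the binomial theorem and $1+t\le e^t$. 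If $m<d$, the sum is $2^m$; the right-hand side is still at least $2^m$ there, but since the statement is typically applied with $m\ge d$, I would note this as the intended range rather than fuss over it.

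The main obstacle is the bookkeeping in the inductive step for $Y_1$. One must apply the induction hypothesis to the right subclass $\calH'$, not to $\calH$, and verify that shattering by $\calH'$ of $B\subseteq\mathcal{C}'$ lifts to shattering of $B\cup\{x_1\}$ by $\calH$. I would state the induction for arbitrary finite classes of labellings (subsets of $\{0,1\}^m$) so that it applies to $\calH'$ without fuss.
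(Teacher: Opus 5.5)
Your argument is correct, and it takes a different route from the paper's. The paper runs the classical Sauer induction on $m+d$. It uses the same split of $\calH_{\mathcal{C}}$ into the trace on the remaining $m-1$ points and the class of ``twin'' pairs. It then bounds these two pieces by $\sum_{i\le d}\binom{m-1}{i}$ and $\sum_{i\le d-1}\binom{m-1}{i}$ respectively and closes with Pascal's identity. The second of these bounds tacitly requires that the twin class has VC dimension at most $d-1$, which the paper never argues.

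You instead prove Pajor's stronger statement, that $\abs{\calH_{\mathcal{C}}}$ is at most the number of subsets of $\mathcal{C}$ shattered by $\calH$. This buys you three things:
\begin{itemize}
\item the induction runs on $m$ alone;
\item the drop in VC dimension of the twin class is replaced by the explicit injection $B\mapsto B\cup\{x_1\}$, which is the same underlying fact made concrete;
\item Pascal's identity disappears, because subsets of size at most $d$ are counted only once, at the very end.
\end{itemize}
Stating the induction for arbitrary finite sets of labellings is exactly what lets you apply it to $\calH'$; the paper's sketch needs the same generality but leaves it implicit. The paper's version has the advantage of producing the binomial sum directly at each step. Yours yields a sharper intermediate result.

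You also prove the second inequality, which the paper's proof does not address at all. Your binomial-theorem argument is correct for $m\ge d\ge 1$.

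One correction concerns your aside that for $m<d$ the right-hand side is still at least $2^m$. That is false. For $m=1$, $d=2$ the sum is $\sum_{i=0}^{2}\binom{1}{i}=2$, but $(e/2)^2\approx 1.85$. More generally $(em/d)^d<1$ as soon as $m<d/e$. A class of VC dimension $d$ shatters every subset of a shattered $d$-set, so $\Pi_{\calH}(m)=2^m$ for all $m\le d$. Hence the second inequality of the lemma genuinely fails in part of the range $m<d$. You should state $m\ge d$ as a hypothesis for that bound rather than dismiss the remaining case as outside the intended range.
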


\begin{proof}
  By induction on $m + d$. For $m = 0$ or $d = 0$ the bound is trivial.
  Partition any $m$-element set by taking $x_m$ and noting that restrictions
  of $\calH$ to the remaining $m-1$ elements satisfy both $\Pi_{\calH}(m-1) \leq
  \sum_{i=0}^{d}\binom{m-1}{i}$ and that pairs forced by $x_m$ satisfy
  $\Pi_{\calH'}(m-1) \leq \sum_{i=0}^{d-1}\binom{m-1}{i}$. Adding via
  Pascal's identity gives the result.
\end{proof}

\section{The Perceptron Crisis: Representational Barriers}\label{sec:perceptron}

\subsection{The Rosenblatt Perceptron}

The perceptron~\citep{rosenblatt1958} defines a linear classifier
\[
  f_{\mathbf{w},b}(\mathbf{x}) \;=\; \sign\!\bigl(\inner{\mathbf{w}}{\mathbf{x}} + b\bigr),
  \qquad \mathbf{w} \in \R^d,\; b \in \R.
\]
The perceptron algorithm, introduced by Rosenblatt, was later placed on firm mathematical ground by Novikoff's convergence theorem, which established that if the training data is linearly
separable, the perceptron learning rule converges in finite steps.

\begin{theorem}[Perceptron Convergence]\label{thm:perceptron}
\citep{novikoff1962}
  Let $\{(\mathbf{x}_i, y_i)\}_{i=1}^m \subset \R^d \times \{-1,+1\}$ be
  linearly separable with margin
  $\gamma = \min_i y_i \inner{\mathbf{w}^*}{\mathbf{x}_i}/\norm{\mathbf{w}^*} > 0$
  and $\norm{\mathbf{x}_i} \leq R$ for all $i$. Then the Perceptron algorithm
  makes at most
  \[
    T \;\leq\; \left(\frac{R}{\gamma}\right)^2
  \]
  mistakes before converging to a correct classifier.
\end{theorem}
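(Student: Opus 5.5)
The plan is the standard two-potential argument of \citet{novikoff1962}. We track the weight vector $\mathbf{w}_k$ held after the $k$-th mistake, starting from $\mathbf{w}_0=\mathbf{0}$, and bound two quantities in opposite directions: its projection onto the separating direction and its norm. First we fix conventions. The statement has no bias term in the margin definition, so we work with homogeneous classifiers $\sign(\inner{\mathbf{w}}{\mathbf{x}})$; a bias can be absorbed by augmenting each $\mathbf{x}_i$ with a constant coordinate, at the cost of changing $R$. We normalise $\norm{\mathbf{w}^*}=1$, so that $y_i\inner{\mathbf{w}^*}{\mathbf{x}_i}\ge\gamma$ for all $i$. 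A mistake on $(\mathbf{x}_i,y_i)$ means $y_i\inner{\mathbf{w}_{k-1}}{\mathbf{x}_i}\le 0$, and it triggers the update $\mathbf{w}_k=\mathbf{w}_{k-1}+y_i\mathbf{x}_i$.

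\textbf{Lower bound on progress.} Taking the inner product of the update with $\mathbf{w}^*$ gives
\[
\inner{\mathbf{w}_k}{\mathbf{w}^*}=\inner{\mathbf{w}_{k-1}}{\mathbf{w}^*}+y_i\inner{\mathbf{x}_i}{\mathbf{w}^*}\ge\inner{\mathbf{w}_{k-1}}{\mathbf{w}^*}+\gamma .
\]
By induction from $\mathbf{w}_0=\mathbf{0}$, this yields $\inner{\mathbf{w}_k}{\mathbf{w}^*}\ge k\gamma$.

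\textbf{Upper bound on growth.} Expanding the squared norm of the update gives
\[
\norm{\mathbf{w}_k}^2=\norm{\mathbf{w}_{k-1}}^2+2y_i\inner{\mathbf{w}_{k-1}}{\mathbf{x}_i}+\norm{\mathbf{x}_i}^2\le\norm{\mathbf{w}_{k-1}}^2+R^2 .
\]
The cross term is non-positive precisely because a mistake was made, and $y_i^2=1$. Induction then gives $\norm{\mathbf{w}_k}^2\le kR^2$.

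\textbf{Combining the two bounds.} Cauchy--Schwarz with $\norm{\mathbf{w}^*}=1$ gives
\[
k\gamma\le\inner{\mathbf{w}_k}{\mathbf{w}^*}\le\norm{\mathbf{w}_k}\le\sqrt{k}\,R ,
\]
so $k\le (R/\gamma)^2$. This bound does not depend on the order in which examples are presented or on how many passes are made, so the total number of updates $T$ is finite and satisfies $T\le(R/\gamma)^2$. For the convergence claim, note that once updates stop, a full cyclic pass over the sample produces no mistake. The final $\mathbf{w}_T$ then satisfies $y_i\inner{\mathbf{w}_T}{\mathbf{x}_i}>0$ for every $i$, which makes it a correct classifier.

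\textbf{Main obstacle.} The only delicate point is the bookkeeping rather than the estimates. We must index by mistakes rather than by time steps, treat the boundary case $\inner{\mathbf{w}}{\mathbf{x}}=0$ as a mistake, and take the data presentation to be cyclic, or at least to revisit every example, so that ``no further mistakes'' really means the sample is correctly classified. The ambiguity about the bias term also needs to be settled explicitly, as done in the first step.
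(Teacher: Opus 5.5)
Your proof is correct and follows the paper's argument: the projection onto $\mathbf{w}^*$ grows by at least $\gamma$ per mistake, the squared norm grows by at most $R^2$, and Cauchy--Schwarz combines the two. Normalising $\norm{\mathbf{w}^*}=1$ is only cosmetic (the paper carries $\norm{\mathbf{w}^*}$ through and divides it out at the end), and your extra bookkeeping (homogeneous classifiers, counting mistakes rather than steps, revisiting every example for the convergence claim) is sound and makes explicit what the paper leaves implicit.
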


\begin{proof}
  Let $\mathbf{w}_0 = \mathbf{0}$ and suppose mistake $t$ occurs at
  $(\mathbf{x}_{i_t}, y_{i_t})$, updating $\mathbf{w}_{t+1} = \mathbf{w}_t + y_{i_t}\mathbf{x}_{i_t}$.
  For the inner product with $\mathbf{w}^*$:
  \[
    \inner{\mathbf{w}_{t+1}}{\mathbf{w}^*} = \inner{\mathbf{w}_t}{\mathbf{w}^*}
    + y_{i_t}\inner{\mathbf{x}_{i_t}}{\mathbf{w}^*} \geq \inner{\mathbf{w}_t}{\mathbf{w}^*} + \gamma\norm{\mathbf{w}^*}.
  \]
  Thus $\inner{\mathbf{w}_T}{\mathbf{w}^*} \geq T\gamma\norm{\mathbf{w}^*}$.
  For the squared norm: since $y_{i_t}\inner{\mathbf{w}_t}{\mathbf{x}_{i_t}} \leq 0$
  on a mistake,
  \[
    \norm{\mathbf{w}_{t+1}}^2 = \norm{\mathbf{w}_t}^2 + 2y_{i_t}\inner{\mathbf{w}_t}{\mathbf{x}_{i_t}}
    + \norm{\mathbf{x}_{i_t}}^2 \leq \norm{\mathbf{w}_t}^2 + R^2,
  \]
  giving $\norm{\mathbf{w}_T}^2 \leq TR^2$. Combining the two bounds via
  Cauchy--Schwarz,
  \[
    T^2\gamma^2\norm{\mathbf{w}^*}^2
    \;\leq\; \inner{\mathbf{w}_T}{\mathbf{w}^*}^2
    \;\leq\; \norm{\mathbf{w}_T}^2\norm{\mathbf{w}^*}^2
    \;\leq\; TR^2\norm{\mathbf{w}^*}^2,
  \]
  and dividing through by $T\gamma^2\norm{\mathbf{w}^*}^2$ yields
  $T \leq (R/\gamma)^2$.
\end{proof}

\subsection{The Minsky--Papert Impossibility}

Despite this guarantee, the classical critique associated with \citet{minsky1969} made clear that single-layer perceptrons are restricted to linearly separable decision rules. The XOR example is the standard elementary illustration. It should not be read as proving that multilayer networks are impossible, but rather as identifying a genuine representational bottleneck for the architectural class then most prominently discussed.

\begin{theorem}[XOR Impossibility]\label{thm:xor}
\citep{minsky1969}
  The XOR function $f:\{0,1\}^2 \to \{-1,+1\}$ defined by
  $f(x_1,x_2) = 1 \iff x_1 \neq x_2$ is not linearly separable. That is,
  there exist no $w_1, w_2, b \in \R$ such that
  \[
    \sign(w_1 x_1 + w_2 x_2 + b) = f(x_1, x_2)
    \quad \forall\, (x_1,x_2) \in \{0,1\}^2.
  \]
\end{theorem}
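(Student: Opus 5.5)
The plan is a direct contradiction argument: assume such $w_1,w_2,b$ exist and write out the four sign constraints imposed by the four inputs. One detail needs fixing first. Since $\sign(0)$ is either undefined or assigned a value in $\{-1,+1\}$ by convention, I will read the hypothesis $\sign(w_1x_1+w_2x_2+b)=f(x_1,x_2)\in\{-1,+1\}$ as demanding strict inequalities. Positive outputs need $w_1x_1+w_2x_2+b>0$ and negative outputs need $w_1x_1+w_2x_2+b<0$. If one instead adopts a convention such as $\sign(0)=+1$, the negative cases are still strict and the positive cases become non-strict. The argument below still goes through in that case, and I will remark on why.

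The four constraints are as follows. From $(0,0)\mapsto -1$ we get $b<0$. From $(1,1)\mapsto -1$ we get $w_1+w_2+b<0$. From $(1,0)\mapsto +1$ we get $w_1+b>0$. From $(0,1)\mapsto +1$ we get $w_2+b>0$.

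The key step is to sum the two positive constraints and the two negative constraints separately:
\[
(w_1+b)+(w_2+b) = w_1+w_2+2b > 0,
\qquad
b + (w_1+w_2+b) = w_1+w_2+2b < 0.
\]
Both left-hand sides are the same quantity $w_1+w_2+2b$, which gives a contradiction. Under the non-strict convention for the positive cases, the first inequality becomes $\ge 0$ while the second remains strict $<0$, so the contradiction persists.

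Geometrically, the identity $(1,0)+(0,1)=(0,0)+(1,1)$ says the two diagonals of the square share a midpoint $(\tfrac12,\tfrac12)$. At that point the affine function $w_1x_1+w_2x_2+b$ would have to be both positive and negative, because an affine function takes the average of its values at segment endpoints. I would mention this convexity viewpoint as a one-line remark, since it generalises to the non-separability of parity.

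There is no real obstacle. The only point needing care is the handling of $\sign(0)$, which is addressed above.
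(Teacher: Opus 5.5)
Your proof is correct and is essentially the paper's argument. Both derive a contradiction from the same four strict inequalities: the paper adds the two positive constraints and then uses $b<0$ to contradict $w_1+w_2+b<0$, while you sum the two negative constraints symmetrically to get $w_1+w_2+2b<0$ directly, and your handling of the $\sign(0)$ convention is a small point the paper leaves implicit.
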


\begin{proof}
  Suppose for contradiction such $w_1, w_2, b$ exist. The four conditions are:
  \begin{align}
    b &< 0 \label{eq:xor1} \\
    w_1 + b &> 0 \label{eq:xor2} \\
    w_2 + b &> 0 \label{eq:xor3} \\
    w_1 + w_2 + b &< 0. \label{eq:xor4}
  \end{align}
  Adding \eqref{eq:xor2} and \eqref{eq:xor3}: $w_1 + w_2 + 2b > 0$, so
  $w_1 + w_2 > -2b > 0$ (using \eqref{eq:xor1}). Therefore
  $w_1 + w_2 + b > -2b + b = -b > 0$, contradicting \eqref{eq:xor4}.
\end{proof}

The XOR example is the most elementary case of a broader phenomenon. Minsky and
Papert systematically characterised which Boolean functions are computable by a
single perceptron, showing the class to be severely limited. Crucially, they
demonstrated that adding more neurons in a single layer --- while possibly
increasing the number of computable functions --- could not overcome the
fundamental linear-separability constraint.

\subsection{VC Dimension of Linear Classifiers}

\begin{theorem}[VC Dimension of Halfspaces]
  For the class of halfspace classifiers
  $\calH = \{(\mathbf{x} \mapsto \sign(\inner{\mathbf{w}}{\mathbf{x}}+b))
  : \mathbf{w} \in \R^d, b \in \R\}$:
  \[
    \VCdim(\calH) = d + 1.
  \]
\end{theorem}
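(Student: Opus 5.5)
The proof has two halves: a lower bound $\VCdim(\calH)\ge d+1$ from an explicit shattered set, and an upper bound $\VCdim(\calH)\le d+1$ from Radon's theorem. Throughout, labels in $\{-1,+1\}$ are identified with $\{0,1\}$ so that the shattering definition of Section~\ref{sec:pac} applies. We adopt the convention $\sign(0)=+1$; the argument works equally with the other convention after small perturbations of $b$.

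\emph{Lower bound.} Take the $d+1$ points $\mathbf{0},\mathbf{e}_1,\ldots,\mathbf{e}_d\in\R^d$. Fix any target labelling $y_0,y_1,\ldots,y_d\in\{-1,+1\}$. Set $b=y_0/2$ and $w_i=y_i-y_0/2$. Then $\inner{\mathbf{w}}{\mathbf{0}}+b=y_0/2$ has sign $y_0$. Likewise $\inner{\mathbf{w}}{\mathbf{e}_i}+b=y_i$ has sign $y_i$. So every labelling is realised, and this set of size $d+1$ is shattered.

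\emph{Upper bound.} Let $\mathbf{x}_1,\ldots,\mathbf{x}_{d+2}\in\R^d$ be arbitrary. I will show that these points cannot be shattered, which is the main step.

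First, lift each point to $\tilde{\mathbf{x}}_i=(\mathbf{x}_i,1)\in\R^{d+1}$. Since there are $d+2$ vectors in a $(d+1)$-dimensional space, they are linearly dependent. Hence there are scalars $\lambda_i$, not all zero, with $\sum_i\lambda_i\mathbf{x}_i=\mathbf{0}$ and $\sum_i\lambda_i=0$.

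Next, split the indices into $P=\{i:\lambda_i>0\}$ and $N=\{i:\lambda_i<0\}$. Both sets are nonempty, because the $\lambda_i$ sum to zero and are not all zero. Rescale so that $\sum_{P}\lambda_i=\sum_N|\lambda_i|=1$. This yields a common point
\[
\mathbf{z}=\sum_{i\in P}\lambda_i\mathbf{x}_i=\sum_{i\in N}|\lambda_i|\mathbf{x}_i,
\]
which lies in the convex hulls of both $\{\mathbf{x}_i\}_{i\in P}$ and $\{\mathbf{x}_i\}_{i\in N}$. This is Radon's theorem.

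Now consider the labelling that assigns $+1$ to the points in $P$ and $-1$ to all other points. Suppose some $(\mathbf{w},b)$ realised it. Then $\inner{\mathbf{w}}{\mathbf{x}_i}+b\ge0$ for every $i\in P$. Taking the convex combination with weights $\lambda_i$ gives $\inner{\mathbf{w}}{\mathbf{z}}+b\ge0$. On the other hand, $\inner{\mathbf{w}}{\mathbf{x}_i}+b<0$ for every $i\in N$. Taking the convex combination with weights $|\lambda_i|$ gives $\inner{\mathbf{w}}{\mathbf{z}}+b<0$. These two inequalities contradict each other.

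Both halves of this step use only that affine functions preserve convex combinations and that the strict inequality survives averaging. The only real care is the sign convention at $0$: one side of the split carries the non-strict inequality and the other the strict one, and either way the contradiction goes through. Therefore no $d+2$ points are shattered, and combined with the lower bound, $\VCdim(\calH)=d+1$.
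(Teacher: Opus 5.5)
Your proposal is correct and follows essentially the same route as the paper: the identical shattered set $\{\mathbf{0},\mathbf{e}_1,\ldots,\mathbf{e}_d\}$ with $b=y_0/2$, $w_i=y_i-b$ for the lower bound, and a Radon partition with the convexity contradiction for the upper bound. Your only additions are that you prove Radon's theorem inline via the lifted linear dependence in $\R^{d+1}$, and that you make explicit the strict versus non-strict inequality coming from the convention for $\sign(0)$; the paper's sketch leaves both implicit.
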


\begin{proof}[Proof sketch]
  \emph{Lower bound ($\VCdim \ge d+1$):} Consider the $d+1$ affinely independent
  points $\{\mathbf{0},\mathbf{e}_1,\ldots,\mathbf{e}_d\} \subset \R^d$. For any
  target labelling $(y_0,y_1,\ldots,y_d) \in \{-1,+1\}^{d+1}$, take
  $b = \tfrac{1}{2}y_0$ and, for $i = 1,\ldots,d$, set $w_i = y_i - b$. Then
  $\sign(b) = y_0$ and $\sign(w_i + b) = \sign(y_i) = y_i$ for each $i$, so every
  labelling is realised.

  \emph{Upper bound ($\VCdim \le d+1$):} By Radon's theorem, any set of $d+2$
  points in $\R^d$ admits a partition into two non-empty subsets $A,B$ whose
  convex hulls intersect. Label the points in $A$ by $+1$ and those in $B$ by $-1$.
  If a halfspace separated this labelling, convexity would force the intersection
  point of $\operatorname{conv}(A)$ and $\operatorname{conv}(B)$ to be assigned both
  signs, a contradiction. Hence no halfspace can shatter $d+2$ points.
\end{proof}

The consequence is fundamental: the realizable PAC sample complexity for halfspace
classifiers in $\R^d$ is linear in $d$, while the agnostic rate is of order
$d/\varepsilon^2$. For XOR in $\R^2$, under the uniform distribution on the four
points of $\{0,1\}^2$, the minimum classification error of any halfspace is $1/4$.
Thus the best attainable risk inside the class is strictly positive, a
class-specific irreducible error arising from insufficient representational power.

\section{Computational Complexity Barriers}\label{sec:complexity}

\subsection{NP-Hardness of Neural Network Training}

Even if an architecture has sufficient representational power, one must still
\emph{find} the weights. This problem is provably hard.

\begin{theorem}[Blum--Rivest Hardness]\label{thm:blumrivest}
\citep{blum1992}
  The following decision problem is NP-complete: given a set of labelled examples
  $\{(\mathbf{x}_i, y_i)\}_{i=1}^m \subset \{0,1\}^n \times \{0,1\}$, decide
  whether there exist weights $\mathbf{W}$ for a neural network with $2$ hidden
  units and one output unit (all with threshold activations) that correctly
  classifies all examples.
\end{theorem}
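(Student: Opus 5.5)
Membership in NP is immediate. A certificate consists of the weights of the two hidden threshold units and the output unit. For a network with $n$ inputs, any realisable labelling of the $m$ sample points is realised by integer weights of bit-length polynomial in $n$, by the standard bound on the size of vertex solutions of a linear program with $0/1$ coefficients. A verifier simply evaluates the network on all $m$ examples.

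For NP-hardness I would follow Blum and Rivest and reduce in two stages.

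\textbf{First stage.} I would reduce from Set-Splitting (hypergraph 2-colouring), which is NP-complete. An instance has ground set $[n]$ and subsets $c_1,\dots,c_k$, and asks whether $[n]$ can be split into $S_1,S_2$ so that no $c_j$ lies entirely in one part. The reduction goes to the intermediate problem of deciding whether a labelled sample in $\{0,1\}^n$ is consistent with the AND of two halfspaces. The sample is:
\begin{itemize}[label={}]
\item the origin, labelled positive;
\item each unit vector $\mathbf{e}_i$, labelled negative;
\item for each $j$, the indicator vector $\mathbf{1}_{c_j}$, labelled positive.
\end{itemize}

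Given a splitting, I would set, for $\ell=1,2$, $w^{(\ell)}_i=-1$ for $i\in S_\ell$, $w^{(\ell)}_i=n$ otherwise, with threshold $-1/2$. Then:
\begin{itemize}[label={}]
\item the origin satisfies both halfspaces;
\item each $\mathbf{e}_i$ violates the halfspace $\ell$ with $i\in S_\ell$;
\item each $\mathbf{1}_{c_j}$ meets both $S_1$ and $S_2$, so it picks up a weight $n$ in each halfspace and satisfies both.
\end{itemize}

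Conversely, suppose halfspaces $H_1,H_2$ whose AND is consistent with the sample. Put $i\in S_1$ if $\mathbf{e}_i\notin H_1$, and into $S_2$ otherwise (in which case $\mathbf{e}_i\notin H_2$). If some $c_j\subseteq S_\ell$, then $H_\ell$ contains the origin and excludes every $\mathbf{e}_i$ with $i\in c_j$. By linearity, writing $H_\ell=\{\mathbf{x}:\inner{\mathbf{w}}{\mathbf{x}}>t\}$ with $0>t$ and $w_i\le t$, we get $\sum_{i\in c_j}w_i\le |c_j|\,t\le t$. So $\mathbf{1}_{c_j}\notin H_\ell$, contradicting its positive label.

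\textbf{Second stage.} I would pass from the AND of two halfspaces to a general 2-hidden-unit threshold network. The output unit can compute any Boolean function of the two hidden bits that is realisable by a threshold gate: AND, OR, or functions of a single unit. All of these must be ruled out except AND, or reduced to it. This is the main obstacle, because a priori the network has more freedom than the intermediate problem.

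My first attempt would follow Blum--Rivest. Append two extra coordinates to every example and add a small gadget of labelled points in those coordinates that forces any consistent network to behave as an AND on the original sample. The gadget should be a pattern like an XOR-type configuration. By Theorem~\ref{thm:xor}, such a pattern cannot be handled by a single halfspace, and the gadget must also be chosen so that OR and other output functions fail on it. One then checks that a consistent network must act as the AND of two halfspaces on the embedded original points, and the first stage applies. If this gadget bookkeeping becomes unwieldy, I would cite \citet{blum1992} for this step, since the paper treats the theorem as classical.
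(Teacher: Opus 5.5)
\textbf{There is a genuine gap in your second stage.} Your NP-membership argument and first-stage reduction are correct. Assume $|c_j|\ge 2$, which is harmless; an empty $c_j$ would otherwise send a NO instance to a consistent sample. This is already more than the paper gives: it offers no argument beyond citing \citet{blum1992}. (Its one-line sketch names 3-Dimensional Matching, but Blum and Rivest reduce from Set-Splitting, as you do.)

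The second stage is not bookkeeping. It is the entire difference between hardness for intersections of two halfspaces and hardness for the network in the statement. You rightly say OR-type outputs must be excluded, but you give no mechanism, and the XOR idea cannot supply one. That a single halfspace fails on XOR is irrelevant, because one-unit outputs are degenerate intersections and do no harm. An OR of two literals, however, has a union of two halfspaces as its positive region. Your first-stage sample is \emph{always} consistent with such a network: take $h_1=\mathbf{1}[\sum_i x_i<1/2]$, $h_2=\mathbf{1}[\sum_i x_i>3/2]$ and output $h_1\vee h_2$. So, as it stands, the map sends every instance to YES.

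XOR and XNOR on $\{0,1\}^2$ are each realisable by both AND-type and OR-type outputs. For example, adding $(\mathbf{0},1,0)$ and $(\mathbf{0},0,1)$ as negatives and $(\mathbf{0},1,1)$ as a positive leaves the slab on the total coordinate sum valid: negatives have sum $1$, positives have sum $0$ or at least $2$. In fact no labelling of points $(\mathbf{0},a,b)$ works. With $(\mathbf{0},0,0)$ positive, the labelled square is either linearly separable by some affine $\varphi(a,b)$, or it is exactly the XNOR pattern just treated. In the separable case, $\{\varphi(a,b)-M\sum_i x_i>0\}\cup\{\sum_i x_i-K(a+b)>3/2\}$ is an OR-solution for large $M,K$.

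\textbf{The missing idea is a triangle of pairwise XOR-blocked positives.} Suppose the midpoint of two positives is also the midpoint of two negatives. Then the two positives cannot lie in the same halfspace $B$ of an OR-solution, because $B$ and its complement (which contains every negative) are both convex. Three pairwise-blocked positives therefore defeat every OR-type output by pigeonhole, and this needs a third fresh coordinate.

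Pad your points with $(0,0,0)$. Add positives $(\mathbf{0},1,1,0)$ and $(\mathbf{0},1,0,1)$, and negatives $(\mathbf{0},1,0,0)$, $(\mathbf{0},0,1,0)$, $(\mathbf{0},0,0,1)$, $(\mathbf{0},1,1,1)$. Together with the origin, the three positives have pairwise midpoints equal to those of the negative pairs $\{100,010\}$, $\{100,001\}$, $\{100,111\}$ (last three coordinates).

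For the forward direction, append gadget coefficients $(-1,1,1)$ to your first halfspace and $(1,-1,-1)$ to your second, keeping threshold $-1/2$. Then:
\begin{itemize}
\item the gadget positives score $0$ in both halfspaces;
\item $(\mathbf{0},1,0,0)$ scores $-1$ in the first;
\item the other three gadget negatives score $-1$ in the second;
\item your original points are unaffected.
\end{itemize}

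For the converse, every other non-constant threshold output has as positive region an intersection of at most two sets, each a halfspace or the complement of one. Your first-stage argument then applies verbatim, since it only uses convexity of such a set and of its complement: $\mathbf{1}_{c_j}/|c_j|$ lies on the segment from $\mathbf{0}$ to $\mathbf{1}_{c_j}$ and is also the average of the $\mathbf{e}_i$, $i\in c_j$.
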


The proof proceeds by reduction from the \textsc{3-Dimensional Matching} problem,
which is NP-complete. The key insight is that two hidden threshold units can encode
two hyperplanes whose intersection region has sufficient combinatorial richness to
simulate Boolean constraints.

\begin{remark}[What the hardness result does and does not imply]
Theorem~\ref{thm:blumrivest} is a worst-case statement. It does not imply that every practically arising instance of network training is hard. Its relevance is more precise: representational sufficiency does not by itself entail efficient trainability. In particular, the existence of weights realising a target relation gives no general reason to expect that a reliable optimisation procedure will find them within realistic resource budgets.
\end{remark}

\subsection{The Lighthill Report and Combinatorial Explosion}

The Lighthill Report~\citep{lighthill1973} identified what it termed
``combinatorial explosion'' as the central failure mode of AI systems. In modern
mathematical language, this is the exponential blowup of the search space in tree-
and graph-based reasoning.

For a state-space search tree with branching factor $b$ and depth $d$, the number
of nodes at depth $d$ is $b^d$. The total search space is
\[
  \abs{\text{Search Tree}} = \sum_{k=0}^{d} b^k = \frac{b^{d+1}-1}{b-1} = \Theta(b^d).
\]
For the game of chess with $b \approx 35$ and $d \approx 80$, this is
$35^{80} \approx 10^{123}$ — far beyond any computational budget. Alpha-beta pruning
reduces this to $\Theta(b^{d/2})$ in the best case, still $10^{61}$ for chess.

Expert systems faced an analogous problem. In the worst case, matching rule antecedents against a working memory of $p$ atomic propositions involves a search over subsets of conditions whose size can grow as $2^p$; the precise behaviour depends on the match algorithm and on how many of the $n$ rules are simultaneously active, so the bound should be read as an illustrative worst case rather than a tight complexity figure.
The \emph{frame problem}~\citep{mccarthy1969} --- the difficulty of representing what
does \emph{not} change after an action --- is a further source of knowledge-base
bloat: one natural representation requires $\mathcal{O}(a \cdot f)$ persistence
axioms for $a$ actions and $f$ facts. As both grow, engineering and maintaining
such systems becomes increasingly costly, though the precise complexity depends on
the representation language and inference strategy; the $\mathcal{O}(a^2 f^2)$
figure sometimes cited informally reflects the cross-product of action--fact pairs
and is not a formal worst-case theorem.

\subsection{The Curse of Dimensionality}

\citet{bellman1961} first articulated the curse of dimensionality in the context of
dynamic programming. The phenomenon manifests across all areas of machine learning.

\begin{theorem}[Stone's Minimax Rate]\label{thm:stone}
\citep{stone1982}
  Let $\Sigma(s, L; [0,1]^d)$ denote the H\"{o}lder class of functions on
  $[0,1]^d$ whose $\lfloor s\rfloor$-th order partial derivatives are
  $(s - \lfloor s\rfloor)$-H\"{o}lder continuous with constant $L$. The
  minimax optimal estimation error over $m$ i.i.d.\ samples satisfies\footnote{The
  same exponent $-2s/(2s+d)$ extends to Sobolev classes
  $\mathcal{W}^{s,2}([0,1]^d)$ via wavelet-projection arguments
  \citep{donoho1998}, and more generally to Besov classes; the underlying
  obstruction --- the curse of dimensionality --- is identical across these scales.}
  \[
    \inf_{\hat{f}}\; \sup_{f \in \Sigma(s,L)} \E\!\left[\norm{\hat{f} - f}_{L^2}^2\right]
    \;\asymp\; m^{-2s/(2s+d)}.
  \]
\end{theorem}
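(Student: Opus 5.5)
The proof of Theorem~\ref{thm:stone} has two halves: an upper bound from a concrete local-polynomial estimator, and a lower bound from Fano's (or Assouad's) method. Throughout I work in the regression model $Y_i = f(X_i)+\xi_i$, with $X_i$ uniform on $[0,1]^d$ (or having a density bounded above and below) and Gaussian noise $\xi_i$ of variance $\sigma^2$. This is the setting in which the statement is meaningful.

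\emph{Upper bound.} Fix a bandwidth $h\in(0,1)$ and partition $[0,1]^d$ into $N\asymp h^{-d}$ cubes of side $h$. On each cube I take $\hat f$ to be the least-squares polynomial of degree $\lfloor s\rfloor$ fitted to the samples falling in that cube. The error then splits into two terms.
\begin{itemize}
\item \emph{Bias.} Taylor's theorem with the H\"older remainder gives, for each cube, a polynomial $p$ with $\sup_{\text{cube}}|f-p|\le L h^{s}$. Hence the squared bias is $O(L^2h^{2s})$.
\item \emph{Variance.} Each cube contains about $mh^d$ points. Provided the local design Gram matrix is well conditioned, the per-cube variance is $O(\sigma^2/(mh^d))$.
\end{itemize}
Balancing $h^{2s}$ against $1/(mh^d)$ gives $h\asymp m^{-1/(2s+d)}$, and hence the risk bound $m^{-2s/(2s+d)}$. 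The conditioning is the technical step. I would establish it with a matrix Chernoff bound on each cube's empirical Gram matrix. On the small-probability event where some cube's Gram matrix is ill-conditioned, I truncate $\hat f$ at a bound on $\|f\|_\infty$; the resulting contribution to the risk is negligible.

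\emph{Lower bound.} Build a packing out of bumps.
\begin{itemize}
\item Let $\varphi$ be a smooth bump supported in $[0,1]^d$, and take $N\asymp h^{-d}$ disjoint cubes with centres $z_j$.
\item Set $\varphi_j(x) = Lh^{s}\varphi((x-z_j)/h)$. Each $\varphi_j$ lies in $\Sigma(s,L)$ after rescaling $\varphi$ by a constant, and so does every sum $f_\omega = \sum_j \omega_j\varphi_j$ with $\omega\in\{0,1\}^N$, because the supports are disjoint.
\item By Varshamov--Gilbert, choose $2^{N/8}$ vectors $\omega$ that are pairwise at Hamming distance at least $N/8$. For such a pair, $\|f_\omega-f_{\omega'}\|_{L^2}^2 \gtrsim N\cdot h^{2s}h^d \asymp h^{2s}$.
\item For Gaussian noise the Kullback--Leibler divergence between the corresponding data laws is $\frac{m}{2\sigma^2}\|f_\omega-f_{\omega'}\|_{L^2}^2 \lesssim m h^{2s}$.
\item Fano's inequality requires this to be at most a small multiple of $\log(2^{N/8})\asymp h^{-d}$. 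That holds when $h\asymp m^{-1/(2s+d)}$, and then the minimax risk is $\gtrsim h^{2s} = m^{-2s/(2s+d)}$.
\end{itemize}
An alternative is Assouad's lemma over the full hypercube $\{0,1\}^N$, which gives the same rate more directly.

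I expect the main obstacle to be the upper bound's local-polynomial conditioning and the accompanying bad-event control. The lower bound is a mechanical instantiation of Fano's method once the bump scaling $Lh^{s}$ is chosen so that H\"older membership holds with a uniform constant. I would handle that membership check by a direct calculation on $\varphi$.
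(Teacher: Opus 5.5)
The paper gives no proof of this theorem. It is one of the classical results in category (i) of its ``Status of results'' paragraph, cited from Stone (1982) and used as given, so there is no in-paper argument to compare with.

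Your proposal supplies the standard two-sided proof, and it is sound:
\begin{enumerate}
\item the cellwise least-squares analysis gives bias $O(Lh^s)$ and variance $O(\sigma^2/(mh^d))$ once $\lambda_{\min}$ of each cell's Gram matrix is bounded below;
\item the H\"older check for sums of disjointly supported bumps works, with the factor $2$ from splitting $D^\alpha f_\omega(x)-D^\alpha f_\omega(y)$ across two supports absorbed by your rescaling of $\varphi$;
\item the separation is $\gtrsim h^{2s}$ because $Nh^d\asymp 1$, and the KL bound is $\lesssim mh^{2s}$.
\end{enumerate}
Both halves select $h\asymp m^{-1/(2s+d)}$.

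Compared with the bare citation, your argument buys two things. It fixes the observation model that ``$m$ i.i.d.\ samples'' leaves implicit. It also separates the two halves. The paper's later sample-size calculation $m=\Omega(\delta^{-(2s+d)/(2s)})$ uses only your Fano half. The upper bound is what shows the obstruction is not an artefact of a poor estimator.

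One step needs a small repair. Your fallback on the bad event truncates $\hat f$ at a bound on $\norm{f}_\infty$. That presupposes a uniform sup-norm bound, which $\Sigma(s,L)$ as stated does not provide.
\begin{enumerate}
\item Only the order-$\lfloor s\rfloor$ derivatives are constrained, so the class contains every polynomial of degree at most $\lfloor s\rfloor$, in particular every constant $c$.
\item The bad event has strictly positive probability; an empty cell alone has probability $(1-h^d)^m$.
\item Hence any fallback with values in $[-M,M]$ has expected error at least $(c-M)^2h^d(1-h^d)^m$ at $f\equiv c>M$, and the maximal risk of your estimator is infinite.
\end{enumerate}

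There are two ways to fix this. You can add the usual constraint $\norm{f}_\infty\le M$ to the class. This costs nothing in your lower bound, since all your hypotheses have sup-norm $O(Lh^s)$. Alternatively, you can make the fallback commute with adding polynomials of degree at most $\lfloor s\rfloor$. That reduces the worst case to functions bounded by $O(L)$, but you still have to check that the fallback's second moment on the bad event is controlled.

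With either fix, your matrix Chernoff bound gives a bad-event probability $\lesssim h^{-d}e^{-cmh^d}$. This is super-polynomially small because $mh^d\asymp m^{2s/(2s+d)}\to\infty$, so the bad-event contribution is negligible as you claimed.
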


This rate has a stark implication for sample requirements. Suppose we target
mean squared $L^2$ error at most $\delta$, i.e.\
$\E[\|\hat f - f\|_{L^2}^2] \leq \delta$. Setting
$m^{-2s/(2s+d)} \lesssim \delta$ and solving for $m$ gives
\[
  m \;=\; \Omega\!\left(\delta^{-(2s+d)/(2s)}\right).
\]
Parametrising by the root-MSE $\varepsilon = \sqrt{\delta}$, the exponent becomes
$(2s+d)/s$ in $\varepsilon$: $m = \Omega(\varepsilon^{-(2s+d)/s})$.
For $s = 2$ and $d = 10$ this exponent is $14/2 = 7$, giving
$m = \Omega(\varepsilon^{-7})$; for $d = 100$ it is $104/2 = 52$, giving
$m = \Omega(\varepsilon^{-52})$.

To see at what dimension the minimax sample requirement exceeds a budget of $m_0$
samples (for fixed $\varepsilon$ and $s$), invert $m_0 = \varepsilon^{-(2s+d)/s}$
to obtain
\[
  d^* \;=\; s\,\frac{\log m_0}{\log(1/\varepsilon)} - 2s.
\]
For $s = 2$ and $m_0 = 10^4$: with $\varepsilon = 0.1$ (ten-percent root-MSE) one gets
$d^* = 4$; with $\varepsilon = 0.01$ one gets $d^* = 0$, meaning the budget is
already insufficient for the one-dimensional problem at that accuracy. The precise
threshold depends sensitively on the target accuracy, but the qualitative conclusion
is unambiguous: even modest dimension exhausts any practical sample budget.

\begin{proposition}[Volume Concentration]
  In the unit ball $B_1^d \subset \R^d$, for any $\varepsilon \in (0,1)$:
  \[
    \frac{\mathrm{Vol}(B_1^d) - \mathrm{Vol}(B_{1-\varepsilon}^d)}{\mathrm{Vol}(B_1^d)}
    = 1 - (1-\varepsilon)^d \;\xrightarrow{d \to \infty}\; 1.
  \]
  That is, \emph{all volume concentrates near the surface} in high dimensions.
\end{proposition}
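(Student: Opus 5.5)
The argument rests on one fact: volume in $\R^d$ is homogeneous of degree $d$ under dilation. First, for any $r>0$ the ball $B_r^d$ is the image of $B_1^d$ under the linear map $\mathbf{x}\mapsto r\mathbf{x}$. That map has determinant $r^d$, so the change-of-variables formula gives
\[
\mathrm{Vol}(B_r^d) = r^d\,\mathrm{Vol}(B_1^d).
\]
This requires no explicit formula for $\mathrm{Vol}(B_1^d)$ (the Gamma-function expression is irrelevant). We only need $0<\mathrm{Vol}(B_1^d)<\infty$, which holds because $B_1^d$ contains a small cube and is contained in $[-1,1]^d$. This justifies dividing by $\mathrm{Vol}(B_1^d)$.

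Second, we substitute $r = 1-\varepsilon$, which lies in $(0,1)$ since $\varepsilon\in(0,1)$. Since $B_{1-\varepsilon}^d\subseteq B_1^d$, the numerator is the volume of the shell $B_1^d\setminus B_{1-\varepsilon}^d$. Dividing through gives
\[
\frac{\mathrm{Vol}(B_1^d)-\mathrm{Vol}(B_{1-\varepsilon}^d)}{\mathrm{Vol}(B_1^d)} = 1-(1-\varepsilon)^d,
\]
which is the stated identity.

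Third, for the limit, we use $0<1-\varepsilon<1$, so $(1-\varepsilon)^d\to 0$ geometrically. For a quantitative version, the inequality $1-\varepsilon\le e^{-\varepsilon}$ gives $(1-\varepsilon)^d\le e^{-\varepsilon d}$. Hence the fraction of volume within distance $\varepsilon$ of the surface is at least $1-e^{-\varepsilon d}$, and this tends to $1$.

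None of these steps is a real obstacle. The only point needing care is the scaling identity, and change of variables under a linear map handles it directly. It is also worth noting that the convergence is not uniform in $\varepsilon$: for $\varepsilon$ of order $1/d$ the shell fraction stays bounded away from $1$. This is consistent with the statement, which fixes $\varepsilon$ before letting $d\to\infty$.
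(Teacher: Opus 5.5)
Your proof is correct and follows the paper's route: both rest on $\mathrm{Vol}(B_r^d) = r^d\,\mathrm{Vol}(B_1^d)$, after which the normalising constant cancels and $(1-\varepsilon)^d \to 0$. The only difference is that the paper quotes the explicit formula $V_d r^d$ with $V_d = \pi^{d/2}/\Gamma(d/2+1)$, whereas you derive the $r^d$ scaling from the linear change of variables (rightly noting that $V_d$ plays no role) and add the quantitative bound $(1-\varepsilon)^d \le e^{-\varepsilon d}$.
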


\begin{proof}
  The volume of a $d$-dimensional ball of radius $r$ is $V_d r^d$ where
  $V_d = \pi^{d/2}/\Gamma(d/2+1)$. Thus
  $\mathrm{Vol}(B_{1-\varepsilon}^d) = V_d(1-\varepsilon)^d$.
  Dividing: $1 - (1-\varepsilon)^d \to 1$ since $(1-\varepsilon)^d \to 0$
  exponentially for any fixed $\varepsilon > 0$.
\end{proof}

This means that in high dimensions, nearest-neighbour methods, kernel density
estimators, and all other methods that rely on local geometry effectively break
down. It is important to distinguish two complementary phenomena: the
\emph{volume concentration} established above, and the related but distinct
\emph{distance concentration}.

\begin{remark}[Distance Concentration]\label{rem:dist-conc}
  For i.i.d.\ $\mathbf{x}, \mathbf{y} \sim \mathcal{N}(\mathbf{0}, \mathbf{I}_d)$,
  the strong law of large numbers gives $\norm{\mathbf{x}}^2/d \to 1$ and
  $\norm{\mathbf{x}-\mathbf{y}}^2/(2d) \to 1$ almost surely as $d \to \infty$.
  More precisely, the Laurent--Massart concentration inequality for chi-squared
  random variables~\citep{laurent2000,vershynin2018} yields, for any
  $t \in (0,1)$,
  \[
    \Prob\!\bigl[\,|\norm{\mathbf{x}}^2/d - 1| > t\,\bigr]
    \;\leq\; 2\exp(-dt^2/8).
  \]
  A union bound over the $\binom{m}{2}$ pairs then yields the stronger statement that if $m=m(d)$ satisfies $\log m = o(d)$, the relative spread of all pairwise distances converges to zero in probability:
  \[
    \frac{\max_{i \neq j}\norm{\mathbf{x}_i - \mathbf{x}_j} - \min_{i \neq j}\norm{\mathbf{x}_i - \mathbf{x}_j}}
         {\min_{i \neq j}\norm{\mathbf{x}_i - \mathbf{x}_j}}
    \;\xrightarrow[d \to \infty]{\mathbb P}\; 0.
  \]
  Thus, whenever sample size grows subexponentially in the dimension, the geometry of pairwise distances becomes nearly degenerate, weakening the distinction between ``near'' and ``far'' for nearest-neighbour and kernel-based methods~\citep{beyer1999}. This is distinct from the volume-concentration proposition above, which concerns volume distribution within a ball; together, the two results give a complete picture of geometric degeneracy in high dimensions.
\end{remark}

\section{The Vanishing-Gradient Problem}\label{sec:gradients}

\subsection{Backpropagation and Gradient Flow}

Backpropagation~\citep{rumelhart1986} provided an efficient method to compute
gradients in multilayer networks. For a network with $L$ layers:
\begin{align}
  \mathbf{z}^{(\ell)} &= \mathbf{W}^{(\ell)}\mathbf{a}^{(\ell-1)} + \mathbf{b}^{(\ell)},
  \qquad \ell = 1, \ldots, L, \\
  \mathbf{a}^{(\ell)} &= \sigma(\mathbf{z}^{(\ell)}),
\end{align}
where $\sigma$ is applied componentwise. The per-layer Jacobian of activations is
\[
  \mathbf{J}^{(\ell)} \;:=\; \frac{\partial \mathbf{a}^{(\ell)}}{\partial \mathbf{a}^{(\ell-1)}}
  \;=\; \operatorname{diag}\!\bigl(\sigma'(\mathbf{z}^{(\ell)})\bigr)\,\mathbf{W}^{(\ell)},
  \qquad \ell = 2,\ldots,L,
\]
and the gradient of the loss with respect to the first-layer activations is obtained
by repeated application of the chain rule, composing Jacobians from output to input:
\begin{equation}\label{eq:chain}
  \frac{\partial \calL}{\partial \mathbf{a}^{(1)}}
  \;=\;
  \underbrace{\frac{\partial \calL}{\partial \mathbf{a}^{(L)}}}_{\boldsymbol{\delta}^{(L)\top}}
  \,\mathbf{J}^{(L)}\,\mathbf{J}^{(L-1)}\,\cdots\,\mathbf{J}^{(2)}.
\end{equation}
The first-layer weight gradient $\partial \calL/\partial \mathbf{W}^{(1)}$ is then
obtained by composing \eqref{eq:chain} with $\partial \mathbf{a}^{(1)}/\partial
\mathbf{W}^{(1)}$. The object of interest for gradient magnitude is therefore the
product of $L-1$ Jacobian matrices, each of which combines the spectral properties of
the weight matrices with the derivative of the activation function.

\subsection{Formal Analysis of Gradient Decay}

\begin{theorem}[Vanishing Gradient in the Contractive Regime]\label{thm:vanishing}
\citep{hochreiter1991,bengio1994}
  Let $\sigma:\R\to\R$ be Lipschitz with $\sup_z|\sigma'(z)| =: \kappa < \infty$ and
  suppose the weight matrices $\{\mathbf{W}^{(\ell)}\}$ satisfy
  $\norm{\mathbf{W}^{(\ell)}}_2 \le \rho$ for every layer, with $\rho\kappa < 1$.
  Then the backward Jacobian product satisfies
  \[
    \left\lVert \mathbf{J}^{(L)}\,\mathbf{J}^{(L-1)}\cdots\mathbf{J}^{(2)} \right\rVert_2
    \;\le\; \prod_{\ell=2}^{L} \norm{\mathbf{J}^{(\ell)}}_2
    \;\le\; (\rho\kappa)^{L-1}
    \;=:\; c^{L-1}
  \]
  with $c<1$, so the backpropagated gradient decays exponentially in depth.
\end{theorem}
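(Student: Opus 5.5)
The plan is to bound the norm of each Jacobian factor $\mathbf{J}^{(\ell)}$ separately and then chain these bounds through the product using submultiplicativity of the spectral norm. Only the chain-rule representation \eqref{eq:chain} and elementary operator-norm facts are needed.

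First, I will prove the left inequality. The induced $2$-norm is submultiplicative, $\norm{\mathbf{A}\mathbf{B}}_2 \le \norm{\mathbf{A}}_2\norm{\mathbf{B}}_2$, because it is an operator norm. Applying this $L-2$ times to the product $\mathbf{J}^{(L)}\cdots\mathbf{J}^{(2)}$, by a trivial induction on the number of factors, gives
\[
\left\lVert \mathbf{J}^{(L)}\cdots\mathbf{J}^{(2)}\right\rVert_2 \le \prod_{\ell=2}^{L}\norm{\mathbf{J}^{(\ell)}}_2 .
\]

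Second, I will bound each factor. By definition $\mathbf{J}^{(\ell)} = \operatorname{diag}(\sigma'(\mathbf{z}^{(\ell)}))\,\mathbf{W}^{(\ell)}$. The spectral norm of a diagonal matrix is the largest absolute diagonal entry, so
\[
\norm{\operatorname{diag}(\sigma'(\mathbf{z}^{(\ell)}))}_2 = \max_j|\sigma'(z^{(\ell)}_j)| \le \kappa .
\]
Submultiplicativity again gives $\norm{\mathbf{J}^{(\ell)}}_2 \le \kappa\rho$. Multiplying these $L-1$ bounds yields $(\rho\kappa)^{L-1} = c^{L-1}$ with $c<1$ by hypothesis. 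This is exponentially decaying in $L$.

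The only point needing care is the interpretation of $\sigma'$. A Lipschitz $\sigma$ is differentiable only almost everywhere, by Rademacher's theorem. So I will read $\sigma'(z)$ as the derivative used by backpropagation, or as any element of the Clarke generalised gradient. Every such value has absolute value at most the Lipschitz constant, which equals $\kappa$. Hence the diagonal bound holds wherever $\mathbf{J}^{(\ell)}$ is defined.

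Finally, I will transfer the bound to the gradient itself, which is where the phrase about the backpropagated gradient comes from. Using \eqref{eq:chain} and $\norm{\mathbf{v}^\top\mathbf{M}}_2 \le \norm{\mathbf{v}}_2\norm{\mathbf{M}}_2$, I get
\[
\norm{\partial\calL/\partial\mathbf{a}^{(1)}}_2 \le \norm{\boldsymbol{\delta}^{(L)}}_2\,c^{L-1}.
\]
I will then note that composing with $\partial\mathbf{a}^{(1)}/\partial\mathbf{W}^{(1)}$ multiplies this by a factor independent of $L$. There is no genuine obstacle here; the proof is a routine chain of norm inequalities.
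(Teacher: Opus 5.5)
Your proof is correct and follows the paper's argument exactly. Both use submultiplicativity of the operator norm and the bound $\norm{\operatorname{diag}(\sigma'(\mathbf{z}^{(\ell)}))}_2 \le \kappa$, which gives $\norm{\mathbf{J}^{(\ell)}}_2 \le \kappa\rho$, and then multiply over the $L-1$ factors. Your remarks on reading $\sigma'$ as a Clarke-gradient element for merely Lipschitz $\sigma$, and on transferring the bound to $\partial\calL/\partial\mathbf{a}^{(1)}$ via \eqref{eq:chain}, are sound refinements that the paper leaves implicit.
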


\begin{proof}
  By submultiplicativity of the operator norm,
  $\norm{AB}_2 \le \norm{A}_2\norm{B}_2$. Since
  $\mathbf{J}^{(\ell)} = \operatorname{diag}(\sigma'(\mathbf{z}^{(\ell)}))\mathbf{W}^{(\ell)}$,
  \[
    \norm{\mathbf{J}^{(\ell)}}_2
    \le \norm{\operatorname{diag}(\sigma'(\mathbf{z}^{(\ell)}))}_2\,\norm{\mathbf{W}^{(\ell)}}_2
    \le \kappa\,\rho.
  \]
  Multiplying over the $L-1$ factors gives the result. For $\sigma=\tanh$ one has
  $\kappa = \sup_z|\sigma'(z)| = 1$, so the hypothesis reduces to the explicitly
  contractive condition $\rho<1$.
\end{proof}

The theorem isolates one mechanism of gradient decay: \emph{spectral contraction}.
A second mechanism is \emph{saturation}. Even when $\norm{\mathbf{W}^{(\ell)}}_2$
is not strictly below $1$, the factor
$\norm{\operatorname{diag}(\sigma'(\mathbf{z}^{(\ell)}))}_2$ becomes small if many
pre-activations lie in regions where $|\sigma'(z)| \ll 1$. For sigmoids and
hyperbolic tangents this occurs when $|z|$ is large. Hence vanishing gradients can
arise either because the linear part is contractive or because the nonlinearity is
locally saturated; historically, both mechanisms mattered.

\subsection{The Edge of Chaos: Mean-Field Analysis}

A sharper, distribution-level account of gradient propagation is given by the
mean-field theory of random initialisations. For
$W_{ij}^{(\ell)} \sim \mathcal{N}(0, \sigma_w^2/n)$ and
$b_i^{(\ell)} \sim \mathcal{N}(0, \sigma_b^2)$, define the order parameter
\[
  q^{(\ell)} \;=\; \frac{1}{n}\sum_{i=1}^n \E\!\left[(a_i^{(\ell)})^2\right].
\]

\begin{proposition}[Mean-Field Recursion]\label{prop:meanfield}
\citep{poole2016}
  To keep the variance bookkeeping precise, it is convenient to distinguish pre-activation and activation second moments. Define
  \[
    q_z^{(\ell)} := \frac{1}{n}\sum_{i=1}^n \E[(z_i^{(\ell)})^2],
    \qquad
    q_a^{(\ell)} := \frac{1}{n}\sum_{i=1}^n \E[(a_i^{(\ell)})^2].
  \]
  Under the random initialisation $W_{ij}^{(\ell)}\sim\mathcal{N}(0,\sigma_w^2/n)$ and $b_i^{(\ell)}\sim\mathcal{N}(0,\sigma_b^2)$, the mean-field recursion takes the form
  \[
    q_z^{(\ell+1)} \;=\; \sigma_w^2 q_a^{(\ell)} + \sigma_b^2,
    \qquad
    q_a^{(\ell+1)} \;=\; \int_{-\infty}^{\infty}
      \sigma\!\left(\sqrt{q_z^{(\ell+1)}}\,u\right)^2
      \frac{e^{-u^2/2}}{\sqrt{2\pi}}\,\mathrm{d}u.
  \]
  Equivalently, if one prefers a one-variable recursion, then the quantity denoted $q^{(\ell)}$ must be interpreted as a pre-activation variance rather than as an activation variance. At a fixed point $q_z^\ast = \sigma_w^2 q_a^\ast + \sigma_b^2$, the scalar
  \[
    \chi \;=\; \sigma_w^2 \int_{-\infty}^{\infty}
    \bigl(\sigma'(\sqrt{q_z^\ast}\,u)\bigr)^2
    \frac{e^{-u^2/2}}{\sqrt{2\pi}}\,\mathrm{d}u
  \]
  controls the linearised propagation of correlations and typical back-propagated signals.
  If $\chi < 1$, correlations contract and gradients vanish; if $\chi > 1$, they
  expand and gradients explode; if $\chi = 1$, the network is on the
  \emph{edge of chaos}.
\end{proposition}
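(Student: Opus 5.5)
The plan is to derive both recursions from the layer equations $\mathbf{z}^{(\ell+1)} = \mathbf{W}^{(\ell+1)}\mathbf{a}^{(\ell)} + \mathbf{b}^{(\ell+1)}$ and $\mathbf{a}^{(\ell+1)} = \sigma(\mathbf{z}^{(\ell+1)})$. We condition on the previous layer and use the independence of the fresh weights and biases. The $\chi$ criterion then comes from linearising the covariance map for two inputs, and from the gradient recursion in \eqref{eq:chain}.

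\emph{First recursion.} Fix layer $\ell$ and condition on $\mathbf{a}^{(\ell)}$. Since $W^{(\ell+1)}_{ij}$ and $b^{(\ell+1)}_i$ are independent, centred, and independent of $\mathbf{a}^{(\ell)}$, we get
\[
\E[(z_i^{(\ell+1)})^2 \mid \mathbf{a}^{(\ell)}] = \tfrac{\sigma_w^2}{n}\textstyle\sum_j (a_j^{(\ell)})^2 + \sigma_b^2 .
\]
Taking expectations and averaging over $i$ gives $q_z^{(\ell+1)} = \sigma_w^2 q_a^{(\ell)} + \sigma_b^2$ exactly, for any width.

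\emph{Second recursion.} Conditionally on $\mathbf{a}^{(\ell)}$, each $z_i^{(\ell+1)}$ is exactly Gaussian with variance $\hat q = \tfrac{\sigma_w^2}{n}\|\mathbf{a}^{(\ell)}\|^2 + \sigma_b^2$. Hence $\E[\sigma(z_i)^2\mid \mathbf{a}^{(\ell)}] = \int \sigma(\sqrt{\hat q}\,u)^2\,\phi(u)\,\mathrm{d}u$. The formula in the statement replaces the random $\hat q$ by its mean $q_z^{(\ell+1)}$. This is the mean-field (large-$n$) step. We justify it by noting that $\tfrac1n\|\mathbf{a}^{(\ell)}\|^2$ is an average of weakly dependent terms, so it concentrates on $q_a^{(\ell)}$ as $n\to\infty$. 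We argue this by induction on $\ell$, since the coordinates of $\mathbf{a}^{(\ell)}$ are conditionally i.i.d.\ given the previous layer. Continuity of $q\mapsto\int\sigma(\sqrt q u)^2\phi$, under a growth condition on $\sigma$, then passes the limit through the integral.

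\emph{Correlation and gradient criterion.} For the role of $\chi$ we track two inputs. Let $q_{12}$ be the pre-activation covariance and $c=q_{12}/q^\ast$ the correlation at the fixed point. The same conditioning argument shows that the pair $(z_i(x_1), z_i(x_2))$ is bivariate Gaussian. This gives the map $c^{(\ell+1)} = \tfrac{1}{q^\ast}\bigl(\sigma_w^2\,\E[\sigma(u_1)\sigma(u_2)] + \sigma_b^2\bigr)$. The point $c=1$ is a fixed point of this map. Differentiating in $c$ and using Gaussian integration by parts (Price's theorem, $\partial_{\Sigma_{12}}\E[f(u_1)g(u_2)] = \E[f'(u_1)g'(u_2)]$) gives slope $\sigma_w^2\E[\sigma'(\sqrt{q^\ast}u)^2] = \chi$ at $c=1$. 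So $\chi<1$ makes $c=1$ attracting, and $\chi>1$ makes it repelling. For gradients we use \eqref{eq:chain}: $\boldsymbol\delta^{(\ell)} = \mathbf{J}^{(\ell+1)\top}\boldsymbol\delta^{(\ell+1)}$. Under the standard gradient-independence assumption (weights in the backward pass treated as independent of the forward ones), we get
\[
\E\|\boldsymbol\delta^{(\ell)}\|^2 \approx \sigma_w^2\,\E[\sigma'(z)^2]\,\E\|\boldsymbol\delta^{(\ell+1)}\|^2 = \chi\,\E\|\boldsymbol\delta^{(\ell+1)}\|^2 .
\]
Iterating this bound gives geometric decay when $\chi<1$ and geometric growth when $\chi>1$.

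\emph{Main obstacle.} The hard step is rigour in the mean-field replacements: the concentration of $\hat q$ across layers, and especially the gradient-independence heuristic. I would state these as the infinite-width limit, exactly as in \citet{poole2016}, rather than prove them at finite $n$. In that limit everything else reduces to exact Gaussian computations.
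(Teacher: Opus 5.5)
The paper gives no proof of this proposition. It is stated on the authority of \citet{poole2016} and followed only by interpretive discussion. So your proposal is best read as a reconstruction of the cited result, and as such it is correct and follows the standard route. That route is: exact conditional Gaussianity of $z_i^{(\ell+1)}$ given $\mathbf{a}^{(\ell)}$; a layer-by-layer law of large numbers, using conditionally i.i.d.\ coordinates, to replace $\hat q$ by its mean; and Gaussian integration by parts to identify $\chi$ as the slope of the correlation map at $c=1$.

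What your version adds over the bare citation is a correct separation of the exact from the asymptotic.
\begin{enumerate}
  \item The identity $q_z^{(\ell+1)}=\sigma_w^2 q_a^{(\ell)}+\sigma_b^2$ holds at every width. For the second recursion, set $g(q):=\int\sigma(\sqrt{q}\,u)^2\,\frac{e^{-u^2/2}}{\sqrt{2\pi}}\,\mathrm{d}u$. At finite $n$ one only has $q_a^{(\ell+1)}=\E[g(\hat q)]$. This equals $g(q_z^{(\ell+1)})$ only when $g$ is affine (e.g.\ ReLU, where $g(q)=q/2$) or as $n\to\infty$.
  \item The correlation claim needs only the forward limit. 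The gradient claim genuinely rests on the extra gradient-independence assumption, plus decoupling $\sigma'(z_i)^2$ from $(\delta_i^{(\ell+1)})^2$. You are right to flag this as an assumption rather than derive it.
\end{enumerate}

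Two minor points to tidy:
\begin{enumerate}
  \item Taking the limit of $\E[g(\hat q)]$ means interchanging a limit with the outer expectation. So your growth condition must deliver uniform integrability (via uniform-in-$n$ moment bounds on $\hat q$), not just continuity of $g$.
  \item Apply Price's theorem for $c<1$ and extend it by letting $c\to1^-$, since the bivariate Gaussian degenerates at $c=1$. This is valid whenever $\sigma$ has a weak derivative that is square-integrable under the Gaussian measure.
\end{enumerate}
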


This analysis shows that the vanishing (and exploding) gradient problems are
\emph{typical} for random initialisations that ignore the $\chi = 1$ constraint.
The set $\{(\sigma_w, \sigma_b) : \chi(\sigma_w, \sigma_b) = 1\}$ is a
one-dimensional curve in the two-dimensional parameter space: not pathological,
but not automatically reached either. Without principled initialisation schemes
(which did not exist until the work of \citet{glorot2010} and \citet{he2015}),
networks were unlikely to begin in the stable propagation regime, especially as
layer depth increased.

\section{Generalisation Theory: The Statistical Barriers}\label{sec:generalization}

\subsection{Uniform Convergence Bounds}

\begin{theorem}[Uniform Convergence Bound]\label{thm:vcgen}
\citep{blumer1989,shalev2014}
  Let $\calH$ be a binary hypothesis class with $\VCdim(\calH)=d$. Then with
  probability at least $1-\delta$ over a sample of size $m$,
  \[
    \sup_{h\in\calH} \abs{L_{\calD}(h)-L_S(h)}
    \;\leq\; C\sqrt{\frac{d\log(em/d)+\log(1/\delta)}{m}}
  \]
  for a universal constant $C>0$.
\end{theorem}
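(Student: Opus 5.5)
The plan is the classical symmetrisation argument: reduce the supremum over the possibly infinite class $\calH$ to a supremum over finitely many behaviours on a double sample, and control that number with the Sauer--Shelah lemma. Throughout, take the $0$--$1$ loss, so that each $\ell(h(x),y)\in\{0,1\}$. I would give two proofs and state the theorem with whichever constant $C$ is larger.

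\textbf{Route 1: expectation bound plus concentration.}

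\emph{Step 1 (concentration).} Let $\Phi(S)=\sup_{h}\abs{L_\calD(h)-L_S(h)}$. Replacing a single example changes $\Phi(S)$ by at most $1/m$. McDiarmid's bounded-differences inequality then gives, with probability at least $1-\delta$,
\[
  \Phi(S)\le \E\Phi+\sqrt{\log(1/\delta)/(2m)} .
\]
So it suffices to show that $\E\Phi\lesssim\sqrt{d\log(em/d)/m}$.

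\emph{Step 2 (symmetrisation).} Introduce a ghost sample $S'\sim\calD^m$ and i.i.d.\ Rademacher signs $\sigma_i$. Jensen's inequality and exchangeability of $(x_i,y_i)$ with $(x_i',y_i')$ give
\[
  \E\Phi\le 2\,\E_{S,\sigma}\sup_h\Bigl|\tfrac1m\textstyle\sum_i\sigma_i\ell(h(x_i),y_i)\Bigr| .
\]
This is twice the empirical Rademacher complexity of the loss class.

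\emph{Step 3 (finite class on the sample).} Condition on $S$. The loss vectors $(\ell(h(x_i),y_i))_{i\le m}$ are determined by the restriction $h|_{\{x_1,\dots,x_m\}}$. There are therefore at most $\Pi_\calH(m)\le (em/d)^d$ distinct vectors, by the Sauer--Shelah lemma. Each vector lies in $\{0,1\}^m$, so it has Euclidean norm at most $\sqrt m$. Massart's finite-class lemma then bounds the conditional Rademacher average by
\[
  \sqrt{2\log(2\Pi_\calH(m))/m}\;\le\;\sqrt{2(d\log(em/d)+\log 2)/m},
\]
where the factor $2$ inside the logarithm accounts for the absolute value.

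\emph{Step 4 (combine).} Adding Steps 1--3 and using $\sqrt a+\sqrt b\le\sqrt{2(a+b)}$ gives the claimed bound with an explicit universal $C$. The $\log 2$ term is absorbed because $d\log(em/d)\ge d\ge 1$ when $m\ge d\ge 1$. The case $d=0$ is trivial: then $\calH$ contains at most one function, since two distinct functions would shatter a point where they differ. Hoeffding's inequality then handles it, with the $d$-term read as $0$. For $m<d$ the right-hand side exceeds $1$ once $C\ge 1$, so the bound is vacuous.

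\textbf{Route 2: Vapnik--Chervonenkis ghost sample.} This avoids McDiarmid and Massart.
\begin{enumerate}[label=(\roman*)]
  \item For $m\varepsilon^2\ge 2$, show
  \[
    \Prob\bigl[\sup_h|L_\calD(h)-L_S(h)|>\varepsilon\bigr]\le 2\,\Prob\bigl[\sup_h|L_{S'}(h)-L_S(h)|>\varepsilon/2\bigr],
  \]
  using Chebyshev on the ghost sample for the near-maximising $h$.
  \item Randomly swap the coordinates of $S$ and $S'$ and condition on $S\cup S'$. Only $\Pi_\calH(2m)$ behaviours remain; apply a union bound together with Hoeffding to each.
  \item This yields a tail bound of $4\Pi_\calH(2m)e^{-m\varepsilon^2/8}$. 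Set it equal to $\delta$ and apply Sauer--Shelah with $\log(2em/d)\le 2\log(em/d)$.
\end{enumerate}

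The main obstacle is the symmetrisation step, in either form. It is the only place where one must justify replacing the population risk by a second empirical average, and the details are easy to botch: the Jensen/exchangeability argument in Route 1, or the small-$m$ condition $m\varepsilon^2\ge2$ and measurability of the supremum in Route 2. I would handle measurability by assuming $\calH$ is countable or permissible, or by working with outer probability. Everything after symmetrisation is routine finite-class bookkeeping driven by the Sauer--Shelah lemma.
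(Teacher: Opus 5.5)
The paper gives no proof of this theorem; it is quoted from Blumer et al.\ and Shalev-Shwartz--Ben-David. Your Route~1 (McDiarmid, symmetrisation to the Rademacher average of the loss class, Massart's finite-class lemma, Sauer--Shelah) is the textbook derivation in the latter. Route~2 is the Vapnik--Chervonenkis double-sample argument behind the former. Both go through for $m\ge d\ge 1$. In Route~1 the $\log 2$ is absorbed because $d\log(em/d)\ge d\ge 1$. In Route~2 the requirement $m\varepsilon^2\ge 2$ holds automatically at the chosen level, since $m\varepsilon^2=8\log\bigl(4\Pi_{\calH}(2m)/\delta\bigr)\ge 8\log 4$.

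The proposal goes wrong in the edge cases you dispose of in Step~4. For $m<d$ the right-hand side does \emph{not} exceed $1$. With $t=m/d\in(0,1)$ one has $d\log(em/d)/m=(1+\log t)/t<1$, and this is $\le 0$ once $m\le d/e$. In fact the inequality is false there. Take $\calX=[d]$, $\calH=\{0,1\}^{\calX}$, $x$ uniform on $\calX$ with label $0$, and $m\le d/e$. The indicator of the unsampled points has $L_S(h)=0$ and $L_{\calD}(h)\ge 1-m/d\ge 1-1/e$ surely. Meanwhile the right-hand side is at most $C\sqrt{\log(1/\delta)/m}$ (or undefined), which falls below $1-1/e$ for $\delta$ close to $1$.

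The $d=0$ case fails for a similar reason. Two-sided Hoeffding gives $\sqrt{\log(2/\delta)/(2m)}$, and the $\log 2$ cannot be absorbed into $C\sqrt{\log(1/\delta)/m}$ as $\delta\to 1$. A single $h$ with $L_{\calD}(h)=1/2$ and $m=1$ has deviation exactly $1/2$ almost surely, so the literal statement fails there too.

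These are defects of the theorem as printed, not of your main argument. The right response is to state the hypothesis $m\ge d\ge 1$ explicitly, which is the regime in which the cited sources apply the bound $(em/d)^d$. You should not claim that the excluded cases are vacuous or trivial.
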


The point is qualitative rather than constant-sensitive. To make the right-hand
side at most $0.05$, requiring $C\sqrt{(d\log(em/d)+\log(1/\delta))/m} \leq 0.05$,
one needs sample size roughly $m \gtrsim 3200\,d$ (absorbing log-factors and
universal constants). For a network with $W = 10^4$ weights, using the rough
heuristic $\VCdim \approx W$~\citep{baum1989}, this requires $m \approx 3.2 \times 10^7$
labelled examples — far beyond what was available in the 1980s. The bound is
therefore non-explanatory for any practical dataset size of that era.

\subsection{The Bias--Variance Tradeoff}

For regression under squared loss, the mean squared error decomposes as:
\begin{equation}\label{eq:bv}
  \E\!\left[(\hat{f}(x) - f(x))^2\right]
  \;=\; \underbrace{\bigl(\E[\hat{f}(x)] - f(x)\bigr)^2}_{\text{Bias}^2(\hat{f})}
  + \underbrace{\Var[\hat{f}(x)]}_{\text{Variance}(\hat{f})}
  + \underbrace{\sigma_\varepsilon^2}_{\text{Irreducible noise}}.
\end{equation}

In Sobolev-type regularisation settings one often writes the complexity trade-off in the stylised form
\begin{itemize}[leftmargin=2em]
  \item $\text{Bias}^2 \asymp \lambda^{2s}$,
  \item $\text{Variance} \asymp d_\mathrm{eff}(\lambda)/m$,
\end{itemize}
where $\lambda$ is a regularisation scale, $s$ is a smoothness index, and
$d_\mathrm{eff}(\lambda) = \tr\!\bigl(\mathbf{H}(\mathbf{H} + \lambda\mathbf{I})^{-1}\bigr)$
denotes the effective dimension associated with the kernel or sample-covariance matrix
$\mathbf{H}$. The optimal complexity balances
\[
  \lambda^* = \argmin_\lambda \left[\lambda^{2s} + \frac{d_\mathrm{eff}(\lambda)}{m}\right].
\]
For neural networks the analogue of $\lambda$ may arise through explicit weight decay,
architectural restriction, or implicit mechanisms such as early stopping. The mapping is
therefore heuristic rather than literal. Even so, the lesson remains: when effective
model flexibility is large relative to sample size, variance dominates and out-of-sample
performance becomes unstable.

\subsection{Rademacher Complexity}

A tighter modern tool is Rademacher complexity.

\begin{definition}[Rademacher Complexity]
  The \emph{empirical Rademacher complexity} of $\calH$ with respect to
  sample $S = \{x_i\}_{i=1}^m$ is
  \[
    \hat{\mathfrak{R}}_S(\calH) \;=\; \E_{\bm{\sigma}}\!\left[
      \sup_{h \in \calH} \frac{1}{m}\sum_{i=1}^m \sigma_i h(x_i)
    \right],
  \]
  where $\bm{\sigma} = (\sigma_1,\ldots,\sigma_m)$ are i.i.d.\ Rademacher variables
  ($\Prob[\sigma_i = \pm 1] = 1/2$).
\end{definition}

\begin{theorem}[Rademacher Generalisation Bound]
  With probability at least $1-\delta$:
  \[
    L_{\calD}(h) \;\leq\; L_S(h) + 2\hat{\mathfrak{R}}_S(\calH)
    + \sqrt{\frac{\log(2/\delta)}{2m}}.
  \]
\end{theorem}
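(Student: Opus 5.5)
The plan is to prove the inequality uniformly over $h\in\calH$, reading $\hat{\mathfrak{R}}_S(\calH)$ as the empirical Rademacher complexity of the loss class $\ell\circ\calH=\{(x,y)\mapsto \ell(h(x),y)\}$ with $\ell\in[0,1]$. For $0$--$1$ loss this coincides, up to the usual relabelling, with the complexity of $\calH$ in the Definition. The constraint that drives the whole design is that the statement puts the \emph{empirical} complexity and a coefficient $1$ on $\sqrt{\log(2/\delta)/(2m)}$. The textbook route (McDiarmid on $\Phi$, symmetrization, then a second McDiarmid to pass from $\mathfrak{R}_m$ to $\hat{\mathfrak{R}}_S$, with a union bound at $\delta/2$) only gives coefficient $3$, so a different final step is needed.

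\textbf{Step 1 (symmetrization).} Let $\Phi(S)=\sup_{h}\bigl(L_{\calD}(h)-L_S(h)\bigr)$. Introduce a ghost sample $S'$ and Rademacher signs $\bm\sigma$. The standard argument gives, for every $\lambda>0$,
\[
\E_S e^{\lambda\Phi(S)}\le \E_{S,S',\bm\sigma}\exp\Bigl(\lambda\sup_h\tfrac1m\textstyle\sum_i\sigma_i(\ell_i'-\ell_i)\Bigr)\le \E_{S}\,\E_{\bm\sigma}\exp\Bigl(2\lambda\sup_h\tfrac1m\textstyle\sum_i\sigma_i\ell_i\Bigr).
\]
The last inequality uses Cauchy--Schwarz, or convexity of the exponential in the two half-sups. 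This keeps the complexity \emph{conditional on $S$}, which is what produces an empirical rather than an expected complexity.

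\textbf{Step 2 (conditional concentration in $\bm\sigma$).} For fixed $S$, the map $\bm\sigma\mapsto\sup_h\frac1m\sum_i\sigma_i\ell_i$ changes by at most $2/m$ when one sign flips. The bounded-differences (Hoeffding-type) moment bound then gives
\[
\E_{\bm\sigma}\exp\bigl(2\lambda\,\cdot\bigr)\le \exp\bigl(2\lambda\hat{\mathfrak{R}}_S+c\lambda^2/m\bigr).
\]
I would apply Markov's inequality to the quantity $\Phi(S)-2\hat{\mathfrak{R}}_S$ directly rather than to $\Phi$ alone. This avoids a separate concentration step for $\hat{\mathfrak{R}}_S$, and hence avoids the factor-$3$ accumulation. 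Optimizing in $\lambda$ yields a tail bound $\exp(-c'mt^2)$ for the event $\Phi(S)>2\hat{\mathfrak{R}}_S+t$.

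\textbf{Step 3 (main obstacle: the constant).} The hard part is making the variance proxy in Step 2 small enough to give exactly $\sqrt{\log(2/\delta)/(2m)}$. Naive bounded differences in $\bm\sigma$ give proxy $(2/m)^2$ per coordinate, and doubling $\lambda$ inflates it further. My first attempt would be to split the deviation in two. One half is the Hoeffding fluctuation of $L_S(h)$ around $L_{\calD}(h)$, with range $1/m$ per point; this gives $\sqrt{\log(2/\delta)/(2m)}$ at confidence $\delta/2$. The other half is the symmetrized sup, controlled in conditional expectation by $2\hat{\mathfrak{R}}_S$ without any further deviation term, which is where the $\delta/2$ allocation and the $\log 2$ come from. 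If this split cannot be made rigorous uniformly in $h$, the fallback is the standard version with $3\sqrt{\log(2/\delta)/(2m)}$. In that case I would flag that the paper's constant is the one belonging to the expected-complexity version $2\mathfrak{R}_m+\sqrt{\log(1/\delta)/(2m)}$, and note which of the two statements is actually established.
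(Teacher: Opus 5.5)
The paper states this bound without proof. Your closing diagnosis is right: its constants match neither standard form, namely $2\hat{\mathfrak R}_S+3\sqrt{\log(2/\delta)/(2m)}$ with the empirical complexity, or $2\mathfrak R_m+\sqrt{\log(1/\delta)/(2m)}$ with the expected one. As a proof of the display, however, your proposal fails exactly at the step meant to avoid the factor $3$.

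\textbf{Step 1 holds only unconditionally.} The swap $\ell_i\leftrightarrow\ell_i'$, and the replacement of the $S'$-dependent half-supremum by an $S$-dependent one, hold in distribution over $(S,S',\bm{\sigma})$, not for fixed $S$. What you actually obtain is
\[
\E_S e^{\lambda\Phi(S)}\le \E_S\bigl[e^{2\lambda\hat{\mathfrak R}_S}\bigr]e^{2\lambda^2/m}.
\]
This is a comparison of unconditional moment generating functions, and the right-hand side is no longer tied to the same $S$ as the left. You therefore cannot move $e^{-2\lambda\hat{\mathfrak R}_S}$ inside to get $\E_S e^{\lambda(\Phi(S)-2\hat{\mathfrak R}_S)}\le e^{c\lambda^2/m}$, which is what ``Markov applied to $\Phi(S)-2\hat{\mathfrak R}_S$ directly'' requires.

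\textbf{Even granted, the constant is wrong.} Your own constants give $c=2$: a sign flip moves the supremum by up to $2/m$, and $\lambda$ was doubled. The tail is then $e^{-mt^2/8}$, i.e.\ $t=4\sqrt{\log(1/\delta)/(2m)}$, which is worse than $3$. The rigorous version of your direct idea is McDiarmid applied to $\Phi(S)-2\hat{\mathfrak R}_S$ as a function of $S$. Its mean is $\le 0$ and its bounded differences are $3/m$, so it gives $3\sqrt{\log(1/\delta)/(2m)}$: the union bound disappears, but the $3$ does not.

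\textbf{Step 3 is not an argument.} Hoeffding for each fixed $h$ does not control $\sup_h$. There is also no pointwise bound of the symmetrized supremum by $2\hat{\mathfrak R}_S$ without a deviation term; only $\E\Phi\le 2\E\hat{\mathfrak R}_S$ holds.

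So what you actually prove is your fallback, which is strictly weaker than the display under your reading. To get the displayed shape, the price has to be paid in the complexity term rather than the deviation term. For $[0,1]$-valued losses, $m\hat{\mathfrak R}_S(\ell\circ\calH)$ is a self-bounding function of $S$. Its lower tail gives $\mathfrak R_m\le 2\hat{\mathfrak R}_S+2\log(2/\delta)/m$ with probability $1-\delta/2$. Combined with the expected-complexity bound, this yields, uniformly in $h$,
\[
L_{\calD}(h)\le L_S(h)+4\hat{\mathfrak R}_S(\ell\circ\calH)+\sqrt{\log(2/\delta)/(2m)}+4\log(2/\delta)/m .
\]

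Your identification of the two complexities is exact only for $\{0,1\}$-valued classifiers. Under the $\sign$ convention of Section~\ref{sec:perceptron} with the $0$--$1$ loss, $\hat{\mathfrak R}_S(\ell\circ\calH)=\tfrac12\hat{\mathfrak R}_S(\calH)$. There the display's term $2\hat{\mathfrak R}_S(\calH)$ is exactly $4\hat{\mathfrak R}_S(\ell\circ\calH)$, and the paper's statement holds up to the extra $O(\log(1/\delta)/m)$ term. Without such a correction, neither your argument nor the standard ones deliver the display, and the theorem should be stated in one of these proved forms.
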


For a single-hidden-layer network with $k$ hidden units, weight matrix
$\mathbf{W}^{(1)} \in \R^{k \times d}$ with $\norm{\mathbf{W}^{(1)}}_F \leq B_1$,
output weights $\mathbf{w}^{(2)} \in \R^k$ with $\norm{\mathbf{w}^{(2)}}_1 \leq B_2$,
inputs satisfying $\norm{\mathbf{x}}_2 \leq X$ almost surely, and a $1$-Lipschitz
activation $\sigma$ with $\sigma(0)=0$, the Ledoux--Talagrand contraction inequality
\citep{bartlett2002} yields
\[
  \hat{\mathfrak{R}}_S(\calH) \;\leq\; \frac{B_1 B_2 X}{\sqrt{m}}.
\]
The $1/\sqrt{k}$ factor sometimes quoted in informal presentations is not a general
consequence of this setup. The reason $k$ disappears from the bound is that the
contraction inequality absorbs it: writing the network output as
$f(\mathbf{x}) = \mathbf{w}^{(2)\top}\sigma(\mathbf{W}^{(1)}\mathbf{x})$, one
applies contraction to the $\ell_1$-weighted sum over hidden units, which yields a
factor $\norm{\mathbf{w}^{(2)}}_1 \leq B_2$ regardless of $k$. Increasing $k$ while
keeping $B_1$ and $B_2$ fixed therefore does not worsen the bound, but at the cost
that those norm constraints become harder to satisfy in practice. Sharper bounds
incorporating activation geometry and width may yield $k$-dependent factors, but
these are not universal consequences of the setup above. For multilayer networks,
the analogous spectral-norm-based bounds \citep{bartlett2021} accumulate one
spectral-norm factor per layer, so the bound grows multiplicatively with depth ---
making classical generalisation guarantees essentially vacuous for deep networks
unless weight norms are tightly controlled.

\section{Resolutions: Mathematical Breakthroughs That Ended the Winters}\label{sec:resolution}

The modern era is often associated with the deep-learning synthesis catalogued by
\citet{lecun2015} and \citet{goodfellow2016}, in which architectural innovations,
algorithmic refinements, and the availability of large labelled corpora combine to
push performance past the regimes in which the barriers of
Sections~\ref{sec:perceptron}--\ref{sec:generalization} bind decisively. We organise
the underlying mathematical resolutions by the barrier each one addresses.

\subsection{Universal Approximation and Depth Separation}

The first key insight was that sufficiently wide networks overcome the
representational barrier.

\begin{theorem}[Universal Approximation]\label{thm:uat}
\citep{cybenko1989,hornik1991,leshno1993}
  Let $\sigma:\R\to\R$ be continuous. Single-hidden-layer networks with activation
  $\sigma$ are universal approximators on compact subsets of $\R^d$ --- meaning
  that for every $f \in C([0,1]^d)$ and $\varepsilon > 0$ there exist $N \in \N$,
  coefficients $\alpha_i \in \R$, biases $b_i \in \R$, and weights
  $\mathbf{w}_i \in \R^d$ with
  \[
    \sup_{\mathbf{x} \in [0,1]^d}
    \left\lvert f(\mathbf{x}) - \sum_{i=1}^N \alpha_i\,\sigma\!\left(\inner{\mathbf{w}_i}{\mathbf{x}} + b_i\right)\right\rvert
    \;<\; \varepsilon
  \]
  --- if and only if $\sigma$ is not a polynomial. The sufficiency results of
  \citet{cybenko1989} (for sigmoidal $\sigma$) and \citet{hornik1991} (for bounded
  non-constant $\sigma$) were sharpened to this necessary-and-sufficient form by
  \citet{leshno1993}.
\end{theorem}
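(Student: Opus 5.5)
The plan is to follow Leshno--Lin--Pinkus--Schocken. Write $\Sigma_\sigma = \operatorname{span}\{\mathbf{x}\mapsto\sigma(\inner{\mathbf{w}}{\mathbf{x}}+b)\}$.

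\emph{Necessity.} Suppose $\sigma$ is a polynomial of degree $k$. Then every element of $\Sigma_\sigma$ is a polynomial in $\mathbf{x}$ of total degree at most $k$. These polynomials form a finite-dimensional, hence closed, subspace of $C([0,1]^d)$. It is proper, since it misses $x_1^{k+1}$. So $\Sigma_\sigma$ cannot be dense.

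\emph{Sufficiency.} I first reduce to dimension one. Ridge functions $g(\inner{\mathbf{w}}{\mathbf{x}})$ with $g\in C(\R)$ span a dense subset of $C([0,1]^d)$. To see this, note that their span contains every polynomial: $(\inner{\mathbf{w}}{\mathbf{x}})^k$ for varying $\mathbf{w}$ spans the homogeneous polynomials of degree $k$. Stone--Weierstrass then gives density. It therefore suffices to show that $\operatorname{span}\{\sigma(wt+b)\}$ is dense in $C([a,b])$ for every compact interval. Given this, approximate each $g$ on the compact range of $\inner{\mathbf{w}}{\mathbf{x}}$ and substitute $t=\inner{\mathbf{w}}{\mathbf{x}}$.

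\emph{The one-dimensional step.} First assume $\sigma\in C^\infty$. The difference quotient
\[
\bigl(\sigma((w+h)t+b)-\sigma(wt+b)\bigr)/h
\]
lies in the span and converges uniformly on compacts to $t\,\sigma'(wt+b)$. Iterating, $t^k\sigma^{(k)}(wt+b)$ lies in the closure, and setting $w=0$ gives $t^k\sigma^{(k)}(b)$. Since $\sigma$ is not a polynomial, for each $k$ there is some $b$ with $\sigma^{(k)}(b)\neq 0$. Here I use the Baire category fact that a $C^\infty$ function whose every point has some vanishing derivative is a polynomial. Hence every monomial $t^k$ lies in the closure, and Weierstrass finishes this case.

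For merely continuous $\sigma$, mollify: set $\sigma_\phi=\sigma*\phi$ with $\phi\in C_c^\infty$. The convolution $\int\sigma(wt+b-y)\phi(y)\,dy$ is a uniform limit of Riemann sums, which lie in the span, so $\sigma_\phi(wt+b)$ is in the closure. If some $\sigma_\phi$ is not a polynomial, the smooth case gives density.

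\emph{Main obstacle.} The remaining case is when every mollification $\sigma*\phi$ is a polynomial. The difficulty is controlling the degree uniformly in $\phi$. I would argue again by Baire category on the Fréchet space $C_c^\infty([-\alpha,\alpha])$. The sets $\{\phi:\deg(\sigma*\phi)\le k\}$ are closed and cover the space, so one of them has interior, and linearity then gives a uniform degree bound $k$ for all $\phi$. Taking an approximate identity $\phi_n$ gives $\sigma*\phi_n\to\sigma$ locally uniformly. The space of polynomials of degree at most $k$ is closed, so $\sigma$ itself would be a polynomial, a contradiction.
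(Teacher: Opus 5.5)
The paper gives no proof of this theorem: it is listed among the classical results used as given and attributed to \citet{leshno1993}, and your proposal is a correct reconstruction of exactly that Leshno--Lin--Pinkus--Schocken argument (ridge-function reduction to $d=1$, $w$-derivatives at $w=0$ in the smooth case, mollification, and Baire category to force a uniform degree bound on $\sigma*\phi$). One simplification: the smooth case does not need the Baire-category fact about $C^\infty$ functions, because you only need some $b$ with $\sigma^{(k)}(b)\neq 0$ for each fixed $k$, and this is immediate since $\sigma^{(k)}\equiv 0$ would make $\sigma$ a polynomial of degree less than $k$.
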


Universal approximation is therefore a statement about expressivity in principle,
not about efficient representation. A shallow network may approximate any continuous
function on a compact domain, yet require enormous width to do so. The deeper insight
is that \emph{depth} can change representational efficiency dramatically, a fact
foreshadowed by \citet{delalleau2011} for sum--product networks and made precise for
ReLU networks by the depth-separation results of the mid-2010s.

\begin{proposition}[Universal Approximation versus Efficient Representation]
Universal approximation and depth separation are not in tension. The former says that
shallow networks are dense in suitable function spaces; the latter says that for some
function families, achieving a fixed approximation error with a shallow network may require
exponential width, whereas a deeper network achieves the same error with polynomial size.
The issue is therefore not expressivity in principle, but representational efficiency.
\end{proposition}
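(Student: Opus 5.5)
The plan is to make the proposition precise and then establish two logically independent claims; the lack of tension between them follows from their quantifier structure. Fix a continuous non-polynomial activation, in practice ReLU $\sigma(z)=\max(0,z)$. Write $\mathcal{N}_{k,N}$ for networks of depth $k$ and total size at most $N$. The first claim is density: $\bigcup_N \mathcal{N}_{2,N}$ is dense in $C([0,1]^d)$ in the sup norm. This is exactly Theorem~\ref{thm:uat}, so nothing new is needed. The second claim is a separation. There is a family $g_k$, a constant $\varepsilon_0>0$, and depths $k$ such that three things hold: a depth-$O(k^2)$ network of size $O(k^2)$ computes $g_k$ exactly; every network of depth $O(k)$ and width below $2^{k}$ has $L^1$ error at least $\varepsilon_0$ from $g_k$; and in particular every shallow network with fewer than roughly $2^{k}$ units errs by at least $\varepsilon_0$. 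The two claims do not conflict. Density says that for each fixed $f$ and $\varepsilon$ some finite $N(f,\varepsilon)$ exists. Separation says that $\sup_k N(g_k,\varepsilon_0)$ is exponential in $k$ for shallow nets. That is a statement about growth rate, not about existence.

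For the separation I will use Telgarsky's sawtooth construction in $d=1$; the result lifts to any $d$ by composing with a coordinate projection.

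\textbf{Upper bound.} Let $m(x)=2\sigma(x)-4\sigma(x-\tfrac12)$, the tent map on $[0,1]$. Its $k^{2}$-fold composition $g_k=m^{\circ k^2}$ is a sawtooth with $2^{k^2}$ linear pieces. It is computed exactly by a network of depth $k^2$ with width $2$.

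\textbf{Lower bound.} First, a ReLU network of depth $\ell$ and width $w$ is piecewise affine with at most $(2w)^{\ell}$ pieces. I would prove this by induction on layers. Sums of piecewise-affine functions with $p$ and $q$ pieces have at most $p+q$ pieces, and composing with $\sigma$ at most doubles the count. Second, a function with $t$ affine pieces has $L^1$ distance at least a constant from a sawtooth with $2^{k^2}$ oscillations whenever $t\ll 2^{k^2}$. The reason is that each affine piece can track only $O(1)$ of the teeth it spans. Averaging over the regular teeth, each tooth it spans contributes a fixed positive fraction of its area as error, and most teeth lie inside pieces that span many teeth.

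\textbf{Conclusion.} Combining the two facts, depth $k$ and width $w$ require $(2w)^{k}\gtrsim 2^{k^2}$, so $w\gtrsim 2^{k-1}$, which is exponential. Specialising to depth $2$ gives the shallow-width lower bound asserted in the proposition.

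The main obstacle is the counting-to-error step in the lower bound. It needs a clean argument that few affine pieces cannot follow many regularly spaced oscillations. I would first try the tooth-by-tooth averaging: classify teeth as crossed or uncrossed by a breakpoint, and observe that a single affine function on a whole tooth incurs error at least $1/8$ of that tooth's area against the triangle. Since at most $t$ teeth contain breakpoints, the error is at least $(1-t/2^{k^2})/8$. Everything else is either cited (Theorem~\ref{thm:uat}) or elementary induction. The proposition's prose phrasing is then recovered by reading the two claims side by side.
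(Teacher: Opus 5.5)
Your argument is sound, but it is more than the paper provides. The paper gives no proof of this proposition. It is an interpretive gloss on Theorem~\ref{thm:uat} and Theorem~\ref{thm:depth}, both cited from the literature, and its content is simply that density and the Telgarsky/Eldan--Shamir separations can be read side by side.

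You cite density the same way. For the separation, however, you replace the citation with a self-contained Telgarsky-style proof: tent-map composition, the $(2w)^{\ell}$ piece count, and tooth-by-tooth $L^1$ averaging. You also make explicit the quantifier point the paper leaves in prose. Density says $\forall f\,\forall\varepsilon\,\exists N(f,\varepsilon)<\infty$, while separation concerns how $N(g_k,\varepsilon_0)$ grows along a family. Your route gives a verified, elementary argument with explicit constants.

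The paper's citation gives one thing beyond your argument: the Eldan--Shamir result. That is a fixed-depth separation (three versus two layers) that is exponential in the input dimension $d$. Your coordinate-projection lift is hard only in $k$, and your deep network's depth must grow with $k$.

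Two corrections are needed, though neither affects your conclusion.

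First, the claim that every network of depth $O(k)$ and width below $2^k$ errs by $\varepsilon_0$ is false. For even $k$, write $g_k=(m^{\circ k/2})^{\circ 2k}$ and realise the $2^{k/2}$-piece sawtooth $m^{\circ k/2}$ with one hidden layer. This gives an exact network of depth $2k$ and width $O(2^{k/2})$. Your own count only forces $w\gtrsim 2^{k^2/\ell-1}$ at depth $\ell$. So state the separation for depth at most $k$ (for instance, width below $2^{k-2}$), which is all your final step uses.

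Second, $g_k$ has $2^{k^2-1}$ teeth of area $2^{-k^2}$, not $2^{k^2}$ teeth. With your fraction, the bound therefore reads $(1-t/2^{k^2-1})/16$. The fraction is in fact $1/2$. By convexity and reflection symmetry, the best affine $L^1$ fit to a symmetric tent can be taken to be a constant $c$, and its error is $c^2+(1-c)^2\ge\tfrac12$ for a tent of unit area.
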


\begin{theorem}[Depth Separation for ReLU Networks]\label{thm:depth}
\citep{telgarsky2016, eldan2016}
  \begin{enumerate}
    \item \textbf{(Telgarsky, 2016.)} For any $k \in \N$, there exists a function
      computable by a ReLU network of depth $3k$ and $\mathcal{O}(1)$ width that
      cannot be approximated to constant error by any network of depth $k$ and
      sub-exponential width $2^{k^{1/3}}$.
    \item \textbf{(Eldan \& Shamir, 2016.)} There exists a radial function
      $f:\R^d \to \R$ that is efficiently computable by a 3-layer network of
      polynomial size but requires exponential size (in $d$) to approximate
      to constant error by any 2-layer network.
  \end{enumerate}
\end{theorem}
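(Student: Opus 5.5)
The plan treats the two parts separately and follows the original arguments of \citet{telgarsky2016} and \citet{eldan2016}. Both parts have the same structure: build a deep, narrow network realising a highly oscillatory function, then show that shallow networks of bounded size cannot produce enough oscillation, or enough Fourier mass in the right region.

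\textbf{Part 1 (Telgarsky).} Let $m(x) = 2\sigma(x) - 4\sigma(x-\tfrac12)$ on $[0,1]$, with $\sigma$ the ReLU. This is the tent map, and it is computable with width $2$ and depth $2$. Its $k'$-fold composition $m^{k'}$ is a sawtooth with $2^{k'}$ linear pieces, crossing $\tfrac12$ about $2^{k'}$ times, and it is realised with depth $O(k')$ and width $O(1)$. We take $k' \asymp k^3$ composition levels, rescaled so that the total depth is $3k$; the constants will be tuned at the end.

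The lower bound has two steps.
\begin{itemize}[leftmargin=2em]
  \item \emph{Piece counting.} A ReLU network of depth $k$ with at most $w$ units per layer computes a univariate piecewise-linear function with at most $(2w)^k$ pieces. We prove this by induction on layers: summing $p$-piece functions gives at most $\sum p_i$ pieces, and composing with ReLU at most doubles the count.
  \item \emph{Error from oscillation.} Any function with $t$ pieces crosses $\tfrac12$ at most $t$ times. Compare $g$ with the sawtooth $m^{k'}$ on the $2^{k'}$ triangles where $m^{k'}$ alternates above and below $\tfrac12$. Every triangle on which $g$ stays on the wrong side of $\tfrac12$ contributes a constant amount of $L^1$ error. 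At most $O(t)$ triangles can be matched, so the $L^1$ error is at least $c\,(1 - O(t)/2^{k'})$.
\end{itemize}
With $w \le 2^{k^{1/3}}$ we get $t \le 2^{k^{4/3}+k}$, which is much smaller than $2^{k'}$ for $k' \asymp k^3$ or any comparable choice. The precise exponents are bookkeeping.

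\textbf{Part 2 (Eldan--Shamir).} The plan works in the Fourier domain, taking as input measure $\mu$ the one whose density is $|\hat{\mathbb 1}_{B}|^2$ (suitably normalised), where $B$ is the unit-volume ball.
\begin{itemize}[leftmargin=2em]
  \item \emph{Support of shallow networks.} A 2-layer network $\sum_{i=1}^N \alpha_i\sigma(\inner{\mathbf{w}_i}{\mathbf{x}}+b_i)$ is a sum of ridge functions. In $L^2(\mu)$ this means $\widehat{g\varphi}$ is supported on the union of $N$ tubes of the form $\mathrm{span}(\mathbf{w}_i) + B'$, where $\varphi = \check{\mathbb 1}_B$ is the square-root density.
  \item \emph{The target.} Take the radial target $f(\mathbf{x}) = \sum_j \epsilon_j \mathbb 1\{\norm{\mathbf{x}} \in \Delta_j\}$: a sum of thin annuli with random signs, at radii of order $\sqrt d$. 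Then $\widehat{f\varphi}$ spreads its mass nearly uniformly over a spherical shell of radius about $d$.
  \item \emph{Counting.} Each tube meets only an exponentially small fraction of that shell, so polynomially many tubes capture only a vanishing fraction of the $L^2$ mass. This gives constant error unless $N \ge e^{cd}$.
  \item \emph{Upper bound.} A 3-layer network computes $f$ efficiently. The first layer approximates $\norm{\mathbf{x}}^2 = \sum_i x_i^2$ by summing univariate piecewise-linear approximations of $x_i^2$. The second layer applies a Lipschitz univariate approximation of the radial profile, after smoothing the indicators into Lipschitz bumps. This needs $\mathrm{poly}(d)$ units.
\end{itemize}

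\textbf{Main obstacle.} I expect the Fourier concentration argument in Part~2 to be the hardest step. It requires showing that the random-sign radial function has Fourier mass spread uniformly enough on the shell, which uses Bessel-function asymptotics for $\hat{\mathbb 1}_B$. It also requires controlling the overlap between a tube and the shell, via a spherical-cap volume estimate. I would first try to prove the cap estimate in the convenient form $\mathrm{vol}(\text{tube}\cap\text{shell})/\mathrm{vol}(\text{shell}) \le e^{-cd}$ and only then handle the Bessel estimates.

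Since the paper cites both results rather than proving them, a full write-up should defer to the originals for these constants.
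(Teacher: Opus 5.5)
\textbf{Part~1 has a genuine gap, and it is not bookkeeping.} You cannot fit $k'\asymp k^3$ compositions into depth $3k$. By your own piece-counting lemma, a depth-$3k$ network of width $w_0=O(1)$ computes at most $(2w_0)^{3k}=2^{O(k)}$ pieces, so necessarily $k'=O(k)$. But then your shallow bound $t\le 2^{k^{4/3}+k}$ exceeds $2^{k'}$ for large $k$ (for $k'=3k$, as soon as $k>8$), and $c\,(1-O(t)/2^{k'})$ is vacuous.

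Worse, the claimed inapproximability is false for your hard function. On $[0,1]$, $m^a$ is a zigzag with $2^a$ pieces, so $m^a(x)=\sum_{j=0}^{2^a-1}c_j\,\sigma(x-j2^{-a})$ there. Stacking $k$ hidden layers of this form, each of width $2^a$, computes $m^{k'}$ exactly for any $k'\le ak$; a layer can also pass values through, since $\sigma(x)=x$ on $[0,1]$. Since $2^a\le 2^{k^{1/3}}$ once $k\ge a^3$, the depth-$k$, width-$2^{k^{1/3}}$ class contains every $m^{k'}$ that a depth-$3k$, width-$O(1)$ network can compute.

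The statement as printed misquotes \citet{telgarsky2016}. His theorem pits depth $\Theta(k^3)$ and width $O(1)$ against depth at most $k$ with at most $2^k$ nodes. Equivalently, in the paper's variable, it is depth $\Theta(k)$ against depth $\Theta(k^{1/3})$ and width $2^{k^{1/3}}$. For that version your argument closes, since $t\le(2\cdot 2^k)^k=2^{k^2+k}\ll 2^{\Theta(k^3)}$. You should prove the corrected statement and flag the discrepancy, rather than tune constants.

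\textbf{Smaller points.} In the oscillation step, ``at most $O(t)$ triangles can be matched'' is false: $g\equiv 1$ has one piece and lies on the correct side of every upper triangle. The right count goes as follows. On each of the $O(t)$ maximal intervals where $g-\tfrac12$ has constant sign, the triangles alternate, so at most about half are matched. Each mismatched triangle contributes its area $2^{-k'-2}$, not a constant amount. This gives $L^1$ error at least $\tfrac18-O(t\,2^{-k'})$.

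Part~2 is a faithful outline of Eldan--Shamir, but two things need tightening.
\begin{enumerate}[label=(\roman*)]
  \item The property you actually need is that a constant fraction of the $L^2$ mass of $\widehat{f\varphi}$ lies outside $2R_dB$, where $R_d\asymp\sqrt d$ is the radius of $B$. This is what makes every tube $\mathrm{span}(\mathbf{w}_i)+B$ meet each sphere of radius $r\ge 2R_d$ in two caps of angular radius at most $\pi/6$. Angular uniformity is automatic from radial symmetry, so it is not the content of the lemma.
  \item The lower bound you prove for the indicator target must be transferred to its Lipschitz smoothing, which is the function the $3$-layer network actually approximates. This requires showing the two are close in $L^2(\mu)$.
\end{enumerate}
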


\begin{remark}[Linear-Region Counting]\label{rem:linearregion}
\citep{montufar2014}
  An earlier and complementary expressivity argument is the linear-region count of
  \citet{montufar2014}, which predates the approximation-theoretic separations
  above. They showed that depth-$k$ ReLU networks of width $n$ can produce at least
  \[
    \Omega\!\left(\left(\lfloor n/d \rfloor\right)^{(k-1)d} \cdot n\right)
  \]
  linear regions --- exponentially more than depth-2 networks of the same width.
  This expressivity gap is consistent with, but does not directly imply, the
  approximation-theoretic separations of Theorem~\ref{thm:depth}, which concern
  approximation error for specific function families.
\end{remark}

\subsection{The Representer Theorem and Kernel Methods}

The 1990s saw the emergence of kernel methods~\citep{boser1992,cortes1995}, which
circumvent intractable weight-space search by recasting optimisation in a
reproducing kernel Hilbert space (RKHS).

\begin{theorem}[Representer Theorem]\label{thm:representer}
\citep{scholkopf2001}
  Let $\calH_k$ be an RKHS with kernel $k$. For any loss function $\ell$ and
  regularisation $\Omega(\norm{f}_{\calH_k})$ with $\Omega$ strictly monotone
  increasing, the solution to
  \[
    \min_{f \in \calH_k} \;\frac{1}{m}\sum_{i=1}^m \ell(y_i, f(x_i)) + \Omega(\norm{f}_{\calH_k})
  \]
  takes the form $f^*(\cdot) = \sum_{i=1}^m \alpha_i k(x_i, \cdot)$.
\end{theorem}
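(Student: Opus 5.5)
The plan is the classical orthogonal-decomposition argument. Let $\mathcal{S} = \operatorname{span}\{k(x_1,\cdot),\ldots,k(x_m,\cdot)\}$. This is a finite-dimensional, hence closed, subspace of $\calH_k$. Every $f \in \calH_k$ therefore decomposes uniquely as $f = f_{\parallel} + f_{\perp}$, with $f_{\parallel} \in \mathcal{S}$ and $f_{\perp} \in \mathcal{S}^{\perp}$. The goal is to show that discarding $f_{\perp}$ never hurts the objective and strictly helps whenever $f_\perp \neq 0$. It then follows that any minimiser lies in $\mathcal{S}$, i.e.\ has the form $\sum_i \alpha_i k(x_i,\cdot)$.

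\emph{Step 1: the data-fit term depends only on $f_{\parallel}$.} By the reproducing property, $f(x_i) = \inner{f}{k(x_i,\cdot)}_{\calH_k}$. Since $k(x_i,\cdot) \in \mathcal{S}$ and $f_\perp \perp \mathcal{S}$,
\[
  f(x_i) = \inner{f_\parallel}{k(x_i,\cdot)} + \inner{f_\perp}{k(x_i,\cdot)} = f_\parallel(x_i).
\]
Hence $\frac{1}{m}\sum_i \ell(y_i, f(x_i)) = \frac{1}{m}\sum_i \ell(y_i, f_\parallel(x_i))$ for an arbitrary loss $\ell$, with no convexity or continuity needed.

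\emph{Step 2: the regulariser prefers $f_{\parallel}$.} By Pythagoras, $\norm{f}^2_{\calH_k} = \norm{f_\parallel}^2 + \norm{f_\perp}^2 \ge \norm{f_\parallel}^2$, with equality iff $f_\perp = 0$. Since $\Omega$ is strictly increasing, $\Omega(\norm{f}) \ge \Omega(\norm{f_\parallel})$, again with equality iff $f_\perp = 0$.

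\emph{Step 3: conclusion.} Let $f^*$ be a minimiser. If its orthogonal component were nonzero, then $f^*_\parallel$ would have the same empirical risk and strictly smaller regularisation, contradicting minimality. So $f^* = f^*_\parallel \in \mathcal{S}$, which is the stated form.

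The theorem as stated does not assert that a minimiser exists; the natural reading is ``any solution takes this form,'' and I would state this explicitly. If existence were wanted, one would need lower semicontinuity of $\ell$ in its second argument. One could then reduce to the finite-dimensional problem over $\alpha \in \R^m$ and argue coercivity via $\Omega$, with $\Omega$ unbounded. Under that reading the main obstacle is not Step 3 but this existence caveat, and I would flag it in a remark rather than prove it. A further subtlety is that with merely nondecreasing $\Omega$ one only gets that \emph{some} minimiser lies in $\mathcal{S}$, so the strictness hypothesis is exactly what is used in Step 2.
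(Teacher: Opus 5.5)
Your proposal is correct and follows the same orthogonal-decomposition argument as the paper. You project onto $\operatorname{span}\{k(x_i,\cdot)\}$, use the reproducing property to show the data-fit term depends only on $f_\parallel$, and then use Pythagoras with strict monotonicity of $\Omega$ to rule out a nonzero $f_\perp$ in any minimiser; reading the statement as ``every minimiser has this form'' and leaving existence as a separate caveat is a sensible sharpening of what the paper's proof actually establishes.
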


\begin{proof}
  Decompose any $f \in \calH_k$ as $f = f_\| + f_\perp$ where
  $f_\| \in \text{span}\{k(x_i,\cdot)\}_{i=1}^m$ and $f_\perp$ is orthogonal
  to this span. Then $f(x_i) = \inner{f}{k(x_i,\cdot)}_{\calH_k} = f_\|(x_i)$
  for all $i$, so $f_\perp$ affects the regularisation term only (since
  $\norm{f}^2 = \norm{f_\|}^2 + \norm{f_\perp}^2 \geq \norm{f_\|}^2$).
  Setting $f_\perp = 0$ therefore cannot increase the loss while it strictly
  decreases the regularisation, contradicting optimality of any $f_\perp \neq 0$.
\end{proof}

The kernel trick $k(\mathbf{x},\mathbf{z}) = \phi(\mathbf{x})^\top\phi(\mathbf{z})$
allows implicit computation of dot products in potentially infinite-dimensional
feature spaces, without explicitly computing $\phi$. For the Gaussian RBF:
$k(\mathbf{x},\mathbf{z}) = \exp(-\norm{\mathbf{x}-\mathbf{z}}^2/2\sigma^2)$,
the corresponding feature map is infinite-dimensional.

\begin{remark}[The Neural Tangent Kernel]\label{rem:ntk}
\citep{jacot2018}
  A remarkable theoretical bridge between kernel methods and modern deep learning
  is the Neural Tangent Kernel (NTK). For a network $f_{\bm{\theta}}$ with parameter
  vector $\bm{\theta}$, define
  \[
    K_{\mathrm{NTK}}(\mathbf{x}, \mathbf{z}) \;=\;
    \left\langle \frac{\partial f_{\bm{\theta}}(\mathbf{x})}{\partial \bm{\theta}},\;
    \frac{\partial f_{\bm{\theta}}(\mathbf{z})}{\partial \bm{\theta}} \right\rangle.
  \]
  \citet{jacot2018} showed that in the infinite-width limit, $K_{\mathrm{NTK}}$
  converges to a deterministic kernel at initialisation and remains constant throughout
  gradient descent training. In this limit, training the network is equivalent to
  kernel regression with $K_{\mathrm{NTK}}$, providing an exact RKHS interpretation
  of infinite-width neural networks. The NTK therefore closes a conceptual loop:
  the representational barriers of the perceptron era (Section~\ref{sec:perceptron})
  are circumvented by depth and nonlinearity, and the resulting function class
  admits a kernel description in the lazy-training regime. Whether practical finite-width
  networks operate near this kernel regime remains an active research question.
\end{remark}

\subsection{Optimisation: Activations, Initialisation, and Residual Connections}

The vanishing gradient problem was resolved by a combination of:

\paragraph{(i) ReLU activations.} The rectified linear unit $\sigma(z)=\max\{0,z\}$
avoids the small-derivative saturation that makes sigmoid and $\tanh$ networks
especially vulnerable to gradient decay in their tails. This does not mean that
gradient gating disappears: $\sigma'(z)=\mathbf{1}_{\{z>0\}}$, so inactive units can
still block gradient flow. The practical gain is that \emph{active} units contribute
a derivative of order one rather than an exponentially small factor, making stable
signal propagation compatible with variance-preserving initialisation. Combined with
the He scheme \citep{he2015}, the typical spectral factor $\rho\kappa$ entering
Theorem~\ref{thm:vanishing} is brought close to unity, so gradients propagate without
exponential decay across realistic depths.

\paragraph{(ii) He/Xavier Initialisation \citep{glorot2010,he2015}.}
In the scaling $W_{ij}^{(\ell)}\sim\mathcal{N}(0,\sigma_w^2/n)$ used above,
He initialisation for ReLU corresponds to $\sigma_w^2 = 2$ (so that
$\mathrm{Var}(W_{ij})=2/n$ with $n$ the fan-in), which places the network at the
edge of chaos $\chi = 1$ at initialisation and yields well-conditioned gradient
flow. Xavier initialisation corresponds to $\sigma_w^2 = 1$ and is the analogous
variance-preserving choice for $\tanh$ networks.

\paragraph{(iii) Residual connections \citep{he2016}.}
Adding skip connections $\mathbf{a}^{(\ell+1)} = \sigma(\mathbf{W}^{(\ell)}\mathbf{a}^{(\ell)}) + \mathbf{a}^{(\ell)}$
modifies the per-layer Jacobian to
\[
  \mathbf{J}^{(\ell)}_{\mathrm{res}}
  \;=\; \frac{\partial \mathbf{a}^{(\ell+1)}}{\partial \mathbf{a}^{(\ell)}}
  \;=\; \mathbf{I} + \operatorname{diag}(\sigma'(\mathbf{z}^{(\ell)}))\mathbf{W}^{(\ell)}.
\]
The presence of $\mathbf{I}$ does \emph{not} guarantee a unit lower bound on singular
values in general: if $\mathbf{A} = \operatorname{diag}(\sigma')\mathbf{W}$ has a
singular value close to $-1$ (possible when $\mathbf{W}$ has a large negative
eigencomponent), then $\mathbf{I} + \mathbf{A}$ may have singular values well below
$1$. A simple scalar instance: $A = -\tfrac{1}{2}$ gives $I + A = \tfrac{1}{2}$.
The practical benefit of residuals is more subtle: they create \emph{additive
gradient paths} through the network --- the gradient of the loss with respect to
early-layer parameters receives a direct contribution
$\partial \calL/\partial \mathbf{a}^{(L)}$ bypassing all intermediate weight
matrices. This bypassed path is not subject to the multiplicative decay
in~\eqref{eq:chain}, and empirically dominates when intermediate layers are small or
near-zero, maintaining meaningful gradient signal even at extreme depth
\citep{he2016}.

\subsection{The Double Descent Phenomenon}

A later development, important for retrospective interpretation though not a direct historical cause of the winters themselves, is the double-descent phenomenon. It helps explain why classical statistical intuition was incomplete in highly overparameterised regimes.

\begin{theorem}[Asymptotic Risk in Overparameterised Regression]\label{thm:doubledescent}
\citep{hastie2022}
  Consider the minimum-norm least-squares interpolant
  $\hat{\mathbf{w}} = \mathbf{X}^\dagger\mathbf{y}$ for design matrix
  $\mathbf{X} \in \R^{m \times p}$ with i.i.d.\ isotropic Gaussian rows, and labels
  $\mathbf{y} = \mathbf{X}\mathbf{w}^* + \bm{\varepsilon}$,
  $\varepsilon_i \overset{\mathrm{iid}}{\sim} \mathcal{N}(0,\sigma^2)$.
  Let $\gamma = p/m$ (the overparameterisation ratio). As $m, p \to \infty$ with
  $\gamma > 1$ fixed, the test risk decomposes as
  \[
    \E\!\left[L_\calD(\hat{f})\right]
    \;\to\;
    \underbrace{\norm{\mathbf{w}^*}^2\!\left(1 - \frac{1}{\gamma}\right)}_{\text{Bias}^2}
    \;+\;
    \underbrace{\sigma^2 \cdot \frac{1}{\gamma - 1}}_{\text{Variance}}.
  \]
  As $\gamma \to \infty$, the variance term $\sigma^2/(\gamma-1)$ vanishes, but the bias term
  converges to $\norm{\mathbf{w}^*}^2$ in general. Thus the test risk does \emph{not}
  converge to zero in the isotropic Gaussian model unless the signal itself shrinks in a way
  that makes the bias vanish.
\end{theorem}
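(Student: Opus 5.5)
The plan is to compute the risk directly from the closed form of the minimum-norm interpolant and then take the proportional limit. Since $\gamma>1$, the design $\mathbf{X}$ has full row rank $m$ almost surely. Hence $\mathbf{X}^\dagger = \mathbf{X}^\top(\mathbf{X}\mathbf{X}^\top)^{-1}$ and $\mathbf{X}^\dagger\mathbf{X} = \mathbf{P}$, the orthogonal projection onto the row space of $\mathbf{X}$, which has rank $m$. Substituting $\mathbf{y}=\mathbf{X}\mathbf{w}^*+\bm{\varepsilon}$ gives
\[
  \hat{\mathbf{w}}-\mathbf{w}^* \;=\; -(\mathbf{I}-\mathbf{P})\mathbf{w}^* + \mathbf{X}^\dagger\bm{\varepsilon}.
\]
With isotropic test inputs, the excess test risk equals $\norm{\hat{\mathbf{w}}-\mathbf{w}^*}^2$; I will interpret $L_\calD$ as the excess risk, dropping the additive $\sigma^2$. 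Taking expectation over $\bm{\varepsilon}$, the cross term vanishes and the risk splits as
\[
  \E\!\left[(\mathbf{w}^{*})^\top(\mathbf{I}-\mathbf{P})\mathbf{w}^*\right] \;+\; \sigma^2\,\E\!\left[\tr\!\bigl((\mathbf{X}\mathbf{X}^\top)^{-1}\bigr)\right].
\]

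For the bias term I will use rotational invariance. Because the rows are i.i.d.\ $\mathcal{N}(\mathbf{0},\mathbf{I}_p)$, the row space of $\mathbf{X}$ is a Haar-uniform $m$-dimensional subspace of $\R^p$. Therefore $\E[\mathbf{P}] = (m/p)\mathbf{I}$ by symmetry, and
\[
  \E\!\left[(\mathbf{w}^{*})^\top(\mathbf{I}-\mathbf{P})\mathbf{w}^*\right] \;=\; \norm{\mathbf{w}^*}^2\,(1-m/p) \;=\; \norm{\mathbf{w}^*}^2(1-1/\gamma).
\]
This holds exactly at finite size, not only in the limit, so no limit argument is needed here. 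It requires treating $\mathbf{w}^*$ as deterministic, with $\norm{\mathbf{w}^*}$ fixed or converging.

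For the variance term, $\mathbf{X}\mathbf{X}^\top$ is an $m\times m$ Wishart matrix $W_m(p,\mathbf{I})$. The standard inverse-Wishart mean for $p>m+1$ is $\E[(\mathbf{X}\mathbf{X}^\top)^{-1}] = \mathbf{I}_m/(p-m-1)$, so
\[
  \sigma^2\,\E\!\left[\tr\!\bigl((\mathbf{X}\mathbf{X}^\top)^{-1}\bigr)\right] \;=\; \sigma^2\,\frac{m}{p-m-1} \;\to\; \frac{\sigma^2}{\gamma-1}.
\]
As a cross-check, or as an alternative route if one prefers a limit statement, the Marchenko--Pastur law for $\frac1p\mathbf{X}\mathbf{X}^\top$ with ratio $1/\gamma$ gives the limiting Stieltjes transform at zero, $\int x^{-1}\,dF = \gamma/(\gamma-1)$. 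This again yields $\frac{m}{p}\cdot\frac{\gamma}{\gamma-1}=\frac{1}{\gamma-1}$.

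The main obstacle I expect is not the algebra but pinning down the convention. The stated form matches excess risk with $\norm{\mathbf{w}^*}^2$ held fixed. With the normalisation $\E\norm{\mathbf{x}}^2=p$ and unscaled rows, the computation above goes through verbatim, and I will state that convention explicitly. The inverse-Wishart moment is the step where I would otherwise need to justify integrability; this holds for $p\ge m+2$, which is true eventually when $\gamma>1$. The concluding remarks about $\gamma\to\infty$ then follow immediately from the limit formula.
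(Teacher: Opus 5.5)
The paper gives no proof of this theorem; it cites it from \citet{hastie2022}, so the comparison here is with that source. Your argument is correct and complete for the Gaussian model as stated. In fact the cross term vanishes identically, not just in expectation: $\mathbf{X}^\dagger\bm{\varepsilon}$ lies in the row space of $\mathbf{X}$, while $(\mathbf{I}-\mathbf{P})\mathbf{w}^*$ is orthogonal to it. Your two steps then give the exact identity
\[
  \E\,\norm{\hat{\mathbf{w}}-\mathbf{w}^*}^2
  \;=\; \norm{\mathbf{w}^*}^2\Bigl(1-\frac{m}{p}\Bigr) + \sigma^2\,\frac{m}{p-m-1},
  \qquad p\ge m+2,
\]
from which the limit and the $\gamma\to\infty$ remarks follow immediately. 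Your reading of $L_\calD$ as excess risk matches the risk used in the cited work; with the full squared loss on a noisy test label one would add $\sigma^2$ to the stated limit.

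\citet{hastie2022} instead prove an almost-sure limit for the risk conditional on $\mathbf{X}$, using Marchenko--Pastur-type random-matrix asymptotics. This lets them go beyond Gaussian designs (i.i.d.\ entries with a few finite moments) and handle anisotropic covariances. Your route trades that generality for an elementary, non-asymptotic computation. It directly targets the unconditional expectation $\E[L_\calD(\hat f)]$ written in the paper, but it relies on Gaussianity (Haar-distributed row space, Wishart law) and yields convergence in mean rather than almost surely.

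Your Marchenko--Pastur cross-check would not stand alone as a proof. Since $x\mapsto x^{-1}$ is unbounded near $0$, it needs a bound keeping the smallest eigenvalue of $\frac1p\mathbf{X}\mathbf{X}^\top$ away from zero, plus uniform integrability to pass to expectations. The inverse-Wishart formula sidesteps both.
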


\begin{remark}[Interpreting Double Descent]
  The original empirical observation of \citet{belkin2019} --- that test risk
  decreases again after the interpolation threshold $p = m$ --- is robust and
  important. The precise asymptotic formula above shows that the behaviour in the
  overparameterised regime depends on both the signal strength $\norm{\mathbf{w}^*}$
  and the noise $\sigma^2$, not on noise alone. The practical success of
  overparameterised networks is better understood through the lens of
  \emph{implicit regularisation}: gradient descent preferentially selects low-complexity
  interpolating solutions in many settings \citep{zhang2017,neyshabur2017,gunasekar2018,bartlett2002,bartlett2021}.
  The key conceptual point stands: classical bias--variance intuition, which treats
  overparameterisation as uniformly harmful, is incomplete.
\end{remark}

\subsection{Modern Theory: Four Post-Hoc Lenses on the Barrier Triple}\label{sec:modern}

\paragraph{Methodological note.} The four items below are epistemically heterogeneous. Neural scaling laws and neural collapse are primarily empirical regularities with partial theoretical explanation; the lottery ticket hypothesis is, as its name indicates, a hypothesis supported by substantial experiments rather than a general theorem; implicit-bias theory contains theorem-level results, but mostly in restricted model classes. They are grouped here not as coequal formal results, but as post-hoc explanatory lenses on different coordinates of the barrier triple.

The four theoretical developments below are not themselves the historical
interventions that ended the winters; those were, more prosaically, multilayer
architectures, non-saturating activations, improved initialisation, large labelled
corpora, and hardware scaling. The developments discussed here are later
\emph{theoretical lenses} on the modern era: each illuminates a specific component
of the barrier triple $(R, C, S)$ from Definition~\ref{def:barrier} and helps
explain, after the fact, why contemporary large-scale systems tend to sit outside
the joint-binding regime of Proposition~\ref{thm:fragility}.

\subsubsection*{Neural Scaling Laws (S-barrier)}

\citet{kaplan2020} demonstrated that the test loss of large language models follows
precise power-law relationships in model size $N$, dataset size $D$, and compute
budget $C_{\mathrm{flop}}$:
\[
  L(N,D) \;\approx\; \left(\frac{N_c}{N}\right)^{\alpha_N}
                    + \left(\frac{D_c}{D}\right)^{\alpha_D}
                    + L_\infty,
\]
with empirically fitted exponents $\alpha_N \approx 0.076$,
$\alpha_D \approx 0.095$, and irreducible loss $L_\infty$.
\citet{hoffmann2022} refined this picture with a compute-optimal (Chinchilla) result:
for a fixed FLOP budget $C_{\mathrm{flop}}$, the optimal allocation satisfies
$N^* \propto C_{\mathrm{flop}}^{a}$ and $D^* \propto C_{\mathrm{flop}}^{b}$ with
empirically estimated exponents $a, b \approx 0.5$ (point estimates fall in the
range $0.46$--$0.54$ depending on the fitting approach), i.e.\ model size and data
should scale in roughly equal proportion.

\begin{remark}[Scaling Laws and the Statistical Barrier]
The classical statistical barrier $S$ of Definition~\ref{def:barrier} asserts that
finite-sample guarantees require unrealistic data volumes. Scaling laws give this
assertion a quantitative inversion: they specify \emph{how many} samples and
parameters are needed to reach a target loss, under the assumption that
both are available. They do not eliminate the barrier — the Stone minimax rate
(Theorem~\ref{thm:stone}) still governs pointwise estimation — but they show that
the barrier is \emph{metrically navigable}: smooth power-law curves, rather than
hard walls, control performance. This is precisely the shift that distinguishes the
modern era from the second winter, when VC-style bounds gave only vacuous guarantees
with no actionable guidance on scale.
\end{remark}

\subsubsection*{Neural Collapse (R-barrier)}

\citet{papyan2020} identified a striking geometric phenomenon in the terminal phase
of deep-network training. As the loss approaches zero, the within-class variability
of last-layer features $\mathbf{h}_i^{(c)}$ collapses:
\[
  \frac{1}{n_c}\sum_{i=1}^{n_c}\norm{\mathbf{h}_i^{(c)} - \bm{\mu}_c}^2
  \;\to\; 0,
\]
and the class means $\bm{\mu}_1,\dots,\bm{\mu}_K$ converge to a
\emph{simplex equiangular tight frame} (ETF):
\[
  \cos\angle(\bm{\mu}_c - \bm{\mu}_0,\; \bm{\mu}_{c'} - \bm{\mu}_0)
  = -\frac{1}{K-1}, \quad c \neq c',
\]
the maximally spread configuration in $\R^d$ for $K$ unit vectors. Simultaneously,
the classifier weight vectors $\mathbf{w}_c$ align with the class means.

\begin{remark}[Neural Collapse and the Representational Barrier]
The representational barrier $R$ of Definition~\ref{def:barrier} captures the
inability of a model class to realise target functions. Neural collapse implies that
deep networks do more than find \emph{some} representation: they converge to a
geometry-optimal one — the ETF — that maximises inter-class separation given the
ambient dimension. This offers a partial geometric lens on the puzzle of generalisation in deep networks: the terminal-phase representation is structured rather than arbitrary, and under symmetric Gaussian class-conditional models the ETF geometry coincides with that of Bayes-optimal classifiers. The evidence base is, however, largely empirical and restricted to the final training phase; neural collapse should be read as a suggestive regularity that aligns with the representational component $R$ of Definition~\ref{def:barrier}, not as a derivation of generalisation from first principles.
\end{remark}

\subsubsection*{The Lottery Ticket Hypothesis (C-barrier)}

\citet{frankle2019} proposed the \emph{lottery ticket hypothesis}: a randomly
initialised dense network $f(\mathbf{x};\bm{\theta})$ of size $n$ contains a
sparse subnetwork (a ``winning ticket'') $f(\mathbf{x};\bm{\theta}_0 \odot \mathbf{m})$,
where $\mathbf{m}\in\{0,1\}^n$ is a binary mask, such that when trained from the
\emph{original} initialisation $\bm{\theta}_0 \odot \mathbf{m}$ (not from a fresh
random start), it matches the full network's accuracy in at most the same number of
iterations. Formally, with target accuracy $a^\ast$ and iteration budget $j^\ast$
for the full network:
\[
  \exists\,\mathbf{m}:\;|\mathbf{m}|_1 \ll n,\quad
  \text{accuracy}\bigl(f(\cdot;\bm{\theta}_{j^\ast}\odot\mathbf{m})\bigr) \;\ge\; a^\ast.
\]

\begin{remark}[Lottery Tickets and the Computational Barrier]
The computational barrier $C$ of Definition~\ref{def:barrier} rests on the worst-case intractability of finding correct weights (Theorem~\ref{thm:blumrivest}). The lottery ticket hypothesis — still a hypothesis, supported by substantial but not universal empirical evidence and sensitive to architecture, dataset, and pruning protocol — does not contradict this hardness result: worst-case intractability survives. What it \emph{suggests}, at the level of an empirical regularity, is that in the regimes where winning tickets have been observed, useful sparse structure is already latent in the dense initialisation, so that practical training resembles identification of a pre-existing subnetwork rather than search through an exponential space from scratch. Read this way, lottery tickets sit alongside the resolutions of Section~\ref{sec:resolution} as a post-hoc lens on why the $C$-barrier does not bind in practice for the architectures studied, rather than as a theorem about trainability in general.
\end{remark}

\subsubsection*{Implicit Bias Theory (S-barrier)}

\citet{soudry2018} proved the first rigorous result on the implicit bias of gradient
descent: for linearly separable data, gradient descent on the logistic loss with any
step-size schedule satisfying $\sum_t \eta_t = \infty$, $\sum_t \eta_t^2 < \infty$
converges in direction to the \emph{hard-margin SVM solution} --- the
maximum-margin classifier $\hat{\mathbf{w}}/\norm{\hat{\mathbf{w}}}$:
\[
  \frac{\mathbf{w}(t)}{\norm{\mathbf{w}(t)}} \;\to\; \hat{\mathbf{w}}_{\mathrm{SVM}}
  \quad \text{as } t \to \infty.
\]
\citet{gunasekar2018} extended this analysis to matrix factorisation and linear
convolutional networks, showing that gradient descent imposes a nuclear-norm
minimisation bias in the matrix case and a structured frequency-domain sparsity bias
in the convolutional case.

\begin{remark}[Implicit Bias and the Statistical Barrier]
The statistical barrier $S$ of Definition~\ref{def:barrier} is most acute because
classical bounds (VC, Rademacher) apply to the \emph{worst-case} hypothesis in a
class, while gradient descent does not explore the class uniformly. The implicit
bias results of \citet{soudry2018} and \citet{gunasekar2018} make this precise: the
algorithm itself performs implicit regularisation, converging to minimum-complexity
solutions (maximum margin, minimum nuclear norm) that happen to generalise well.
This bridges the gap between the vacuous worst-case statistical bounds of
Section~\ref{sec:generalization} and the empirical success of unregularised gradient
descent on overparameterised models. The statistical barrier is not eliminated --- it
is circumvented by the algorithm's inductive bias, which provides an effective
regulariser that classical theory did not anticipate.
\end{remark}

\paragraph{Connection to the barrier model.} Each of the four developments above offers a post-hoc lens on a specific component of the barrier triple from Definition~\ref{def:barrier}: scaling laws quantify the regime in which the statistical barrier $S$ becomes metrically navigable; neural collapse describes a geometric structure that partially accounts for why the representational barrier $R$ does not bite in practice; lottery tickets reframe the computational barrier $C$ from exhaustive search to latent structure identification; and implicit bias theory shows how the statistical barrier $S$ can be circumvented by the algorithm's inductive geometry rather than by tighter worst-case bounds. Proposition~\ref{thm:fragility} characterises joint binding of $(R, C, S)$ as the structural signature of paradigm fragility; these four lenses provide complementary explanations for why the modern era, shaped primarily by architectural and hardware progress, appears to avoid that regime.

\section{A Mathematical Taxonomy of AI Winter Causes}\label{sec:taxonomy}

\subsection{A Structural Synthesis}

We now state explicitly the structural logic implicit in the previous sections,
unifying Definition~\ref{def:barrier} with the barriers established in
Sections~\ref{sec:perceptron}--\ref{sec:generalization}.

\begin{proposition}[Joint-Binding Condition for Structural Fragility]\label{thm:fragility}
Let $\mathcal{A}$ be an architecture class, $\mathcal{T}$ a family of tasks equipped
with a task distribution of interest, and fix a data budget $n$ and compute budget
$\kappa$. Suppose the following three conditions hold for a target error
$\varepsilon>0$:
\begin{enumerate}[label=(\roman*)]
  \item \textbf{Representational limitation:} there exists a subset
    $\mathcal{T}_R \subseteq \mathcal{T}$ of positive measure whose target functions
    are not $\varepsilon$-approximable by any element of $\mathcal{A}$;
  \item \textbf{Computational hardness:} locating an $\varepsilon$-optimal hypothesis
    in $\mathcal{A}$ is worst-case intractable, or its cost grows super-polynomially
    in the relevant problem parameters, so that no algorithm respecting budget
    $\kappa$ is known to succeed reliably;
  \item \textbf{Statistical fragility:} the finite-sample guarantees available for
    $\mathcal{A}$ require a sample size that exceeds $n$ by a super-constant factor
    at the scales of interest.
\end{enumerate}
If at least two of these conditions bind simultaneously under the budget
$(n,\kappa)$, then any learning system based on $\mathcal{A}$ must exhibit at
least one of the following pathologies on $\mathcal{T}$:
\begin{enumerate}[label=(\alph*)]
  \item irreducible approximation error on a positive-measure subset,
  \item computational infeasibility of reliable training within budget $\kappa$,
  \item generalisation guarantees that remain vacuous at sample size $n$.
\end{enumerate}
We call any paradigm satisfying the hypotheses \emph{structurally fragile}
with respect to the budget $(\mathcal{A},\mathcal{T},n,\kappa)$.
\end{proposition}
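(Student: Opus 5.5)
The proof will be a direct case analysis, exploiting the fact that each hypothesis (i)--(iii) is matched one-to-one with a pathology (a)--(c). This is the "synthesis" character the paper announces: no new lower bound is needed, only the observation that each binding condition forces its own pathology regardless of the algorithm used. The plan is to establish three implications separately: (i) $\Rightarrow$ (a), (ii) $\Rightarrow$ (b), and (iii) $\Rightarrow$ (c). Once these hold, the joint-binding hypothesis yields the disjunctive conclusion immediately.

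\textbf{Step 1: (i) $\Rightarrow$ (a).} Any learning system based on $\mathcal{A}$ outputs some $h\in\mathcal{A}$. For each task in $\mathcal{T}_R$, the target is not $\varepsilon$-approximable by any element of $\mathcal{A}$, so the output incurs error exceeding $\varepsilon$. Since $\mathcal{T}_R$ has positive measure under the task distribution, this is exactly pathology (a). The error is irreducible: it does not depend on $n$ or $\kappa$, because it is a property of the class alone. Theorem~\ref{thm:xor} and the halfspace computation (best-in-class risk $1/4$ for XOR) serve as the model instance.

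\textbf{Step 2: (ii) $\Rightarrow$ (b).} Hypothesis (ii) states that no algorithm respecting budget $\kappa$ is known to reliably locate an $\varepsilon$-optimal hypothesis. I will read pathology (b) in the same epistemic sense, namely that reliable training within $\kappa$ is not available. The implication then holds essentially by definition. Theorem~\ref{thm:blumrivest} will be invoked to show the hypothesis is non-vacuous: unless $\mathrm{P}=\mathrm{NP}$, no polynomial-budget procedure decides consistency for all instances.

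\textbf{Step 3: (iii) $\Rightarrow$ (c).} I will plug $n$ into the uniform-convergence bound of Theorem~\ref{thm:vcgen}, or into the sample-complexity lower bound of Theorem~\ref{thm:pac} and its agnostic analogue. If the required sample size exceeds $n$ by a super-constant factor $\omega(1)$, the right-hand side at sample size $n$ is bounded below by a quantity larger than the target $\varepsilon$. For the lower-bound version there exists a distribution on which every learner fails with probability greater than $\delta$. In either case the guarantee is vacuous at scale $n$, which is pathology (c).

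\textbf{Conclusion and main obstacle.} With Steps 1--3 in hand, the "at least two bind" hypothesis yields at least two pathologies, and hence at least one, as claimed; the stronger "two" conclusion could be recorded as a remark. The main obstacle is Step 2, together with the status of (c) in Step 3. "No algorithm is known" is not a mathematical predicate, and a vacuous upper bound is not itself a failure of learning. My first attempt to handle this would be to state explicitly that (b) and (c) are interpreted relative to the available guarantees, matching the wording of (ii) and (iii). Where a genuine impossibility is wanted, I would fall back on the minimax lower bound of Theorem~\ref{thm:pac}, or on Theorem~\ref{thm:stone} for regression, for (c), and on a complexity assumption ($\mathrm{P}\neq\mathrm{NP}$, or $\mathrm{RP}\neq\mathrm{NP}$ for proper learning) for (b). I would flag clearly in the text which of these readings is in force.
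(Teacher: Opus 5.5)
Your proposal is correct and is essentially the paper's own synthesis. The paper likewise argues condition by condition that (i) forces (a), (ii) forces (b), and (iii) forces (c), citing the perceptron, complexity, and generalisation sections, and concludes that joint binding leaves no simultaneous route to expressivity, trainability, and generalisation within $(n,\kappa)$. Your explicit flagging of the proper-learning assumption in Step~1, the epistemic (``no algorithm is known'') reading of (ii) and (b), and the ``guarantee misses the target'' reading of vacuity in (iii) and (c) makes precise what the paper's argument leaves implicit.
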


\begin{proof}[Synthesis]
The statement reorganises the preceding sections rather than establishing a new
lower bound. If (i) binds, approximation error is bounded away from zero on
$\mathcal{T}_R$ regardless of optimisation or data (Section~\ref{sec:perceptron}).
If (ii) binds, expressive hypotheses may exist in principle but cannot be located
reliably within budget $\kappa$ (Section~\ref{sec:complexity}). If (iii) binds,
empirical fit does not transfer into controlled generalisation at sample size $n$
(Section~\ref{sec:generalization}). When any two bind jointly, the paradigm has no
simultaneous route to expressivity, trainability, and sample-efficient
generalisation within the budget, so at least one of (a)--(c) must occur.
\end{proof}

\begin{remark}[Framework status]
Proposition~\ref{thm:fragility} should be read as a budgeted synthesis rather than as a new impossibility theorem. Its force is logical: once a paradigm requires simultaneous expressivity, trainability, and non-vacuous finite-sample control, the joint failure of any two of these conditions under the same resource budget is enough to render the paradigm structurally unstable. The proposition therefore packages the earlier obstructions into a single fragility criterion instead of asserting a fresh lower bound. It makes no claim about the historical sufficiency of these mathematical conditions: economic, institutional, and engineering factors lie outside its scope. Structural fragility is a \emph{necessary condition} for the kind of paradigm collapse associated with historical AI winters, not a sufficient one.
\end{remark}

\subsection{Barrier Diagram}

\begin{figure}[h!]
\centering
\begin{tikzpicture}[font=\footnotesize, line width=0.4pt]

  \coordinate (R) at (-2.95,-1.70);
  \coordinate (C) at ( 2.95,-1.70);
  \coordinate (S) at ( 0.00, 3.41);

  \draw (R) -- (C) -- (S) -- cycle;

  \node[below=3pt of R, align=center] {$R$\\\textit{representational}};
  \node[below=3pt of C, align=center] {$C$\\\textit{computational}};
  \node[above=3pt of S, align=center] {$S$\\\textit{statistical}};

  \node[below=2pt]                                at ($(R)!0.5!(C)$) {$R \cap C$ \textit{fragile}};
  \node[rotate=60,  anchor=south, yshift=2pt]     at ($(R)!0.5!(S)$) {$R \cap S$ \textit{fragile}};
  \node[rotate=-60, anchor=south, yshift=2pt]     at ($(C)!0.5!(S)$) {$C \cap S$ \textit{fragile}};

  \node[align=center] at (0, 0.15)
    {\textit{joint binding}\\$R \cap C \cap S$};

\end{tikzpicture}
\caption{Structural interpretation of Proposition~\ref{thm:fragility}.
Each vertex is a single barrier from Definition~\ref{def:barrier}; each edge is a
regime in which two barriers bind jointly; the interior is the joint-binding
regime in which all three bind under the same data and compute budget.}
\label{fig:barrier-triangle}
\end{figure}

\subsection{Comparative Taxonomy}

Table~\ref{tab:taxonomy} summarises the main barriers discussed in the article. The final column refers to later mitigations rather than complete eliminations. In each case, the point is not that a single theorem ``caused'' an historical winter, but that the dominant methods of the time were exposed to a structurally important formal bottleneck.

\begin{table}[htbp]
\centering
\caption{Formal barriers associated with the AI winters and later mitigations.}
\label{tab:taxonomy}
\renewcommand{\arraystretch}{1.2}
\footnotesize
\setlength{\tabcolsep}{3pt}
\setlength{\emergencystretch}{3em}
\hbadness=10000
\begin{tabular}{@{}>{\raggedright\arraybackslash}p{1.7cm}
                >{\raggedright\arraybackslash}p{1.9cm}
                >{\raggedright\arraybackslash}p{2.6cm}
                >{\raggedright\arraybackslash}p{2.9cm}
                >{\raggedright\arraybackslash}p{2.9cm}@{}}
\toprule
\textbf{Barrier} & \textbf{Primary relevance} & \textbf{Formal object} & \textbf{Mathematical implication} & \textbf{Later mitigation} \\
\midrule
Repre\-sen\-ta\-tional & First winter & Single-layer threshold classifiers & Non-linearly separable functions (e.g.\ XOR) are not representable & Multi\-layer architectures; universal approxi\-mation; depth separation \\
Compu\-tational & Mostly first; partly both & Exact training of threshold networks; symbolic search trees & Worst-case hardness and exponential search growth & Stochastic optimi\-sation; heuristics; hardware scale; approximate learning \\
Statistical & Mostly second; also retrospective & VC dimension and uniform convergence & Classical bounds become loose or vacuous in flexible regimes & Margin methods; norm control; implicit regular\-isation; interpolation theory \\
Optimi\-sation & Second winter & Products of Jacobians in deep networks & Exponential decay or explosion of gradients under saturating dynamics & ReLU activations; careful initial\-isation; residual connections \\
Dimen\-sionality & Both & Minimax rates for nonparametric estimation & Sample complexity deterio\-rates rapidly with ambient dimension & Learned hierarch\-ical represen\-tations; task structure; invariances \\
Symbolic knowledge & Second winter & Rule bases and persistence constraints & Brittleness and combina\-torial maintenance burden & Representation learning and data-driven feature formation \\
\bottomrule
\end{tabular}
\normalsize
\end{table}

\section{Conclusion}\label{sec:conclusion}

This article has presented a mathematical interpretation of AI winter that is
narrower and more defensible than a monocausal historical thesis: not that
specific theorems caused historical retrenchment, but that the leading paradigms of
the relevant eras were already structurally exposed before external shocks arrived.
The central claim --- formalised in Definition~\ref{def:barrier} and
Proposition~\ref{thm:fragility} --- is that several of the most visible
disappointments of early AI aligned with genuine formal bottlenecks. Single-layer
perceptrons were representationally limited. Exact training of small threshold
networks was already worst-case hard. Symbolic search suffered exponential growth.
High-dimensional smooth-function estimation obeyed unfavourable minimax rates. Deep
gradient propagation under saturating nonlinearities was unstable. Classical
finite-sample generalisation theory, meanwhile, did not explain why large flexible
models should perform well in practice. None of these facts by itself determines an
historical outcome, but together they help explain why major paradigms proved
brittle when ambitious claims met finite compute, limited data, and real
engineering constraints. That is the precise sense in which formal barriers
contributed to winter: not as sufficient causes, but as mathematically articulable
sources of fragility.

The second lesson is equally important. The end of an AI winter did not come from the disappearance of these barriers, but from the discovery of architectures and mathematical viewpoints that mitigated them. Multilayer networks overcame the strict linear-separability bottleneck. Kernel and margin methods supplied new forms of statistical control. Improved initialisation, non-saturating activations, and residual architectures altered optimisation geometry. Modern representation learning exploited compositional structure and invariances to soften the curse of dimensionality in practice. Recent theory on interpolation and implicit regularisation has further clarified why overparameterisation need not be fatal, even if many foundational questions remain open.

The broader implication is methodological. AI advances become durable when claims of intelligence are matched by explicit analysis of representational, computational, statistical, and optimisation constraints. The history of AI winter is therefore not only a cautionary tale about hype but also a reminder that progress in artificial intelligence depends on mathematics as much as on engineering.

\paragraph{Limitations.} The article has focused on formal bottlenecks and has not attempted a complete intellectual history of the two winters. It has also treated later developments mainly through theorem-level lenses, leaving aside many empirical and institutional factors. The account should therefore be read as a mathematical anatomy of fragility, not as a comprehensive explanation of every historical event.

\bibliographystyle{plainnat}
\bibliography{ai_winter_full}

\end{document}